\documentclass[twoside,11pt]{article}

%

%
%
%

\usepackage[hyphens]{url}
\usepackage{jmlr2e}
\usepackage[utf8]{inputenc} 
\usepackage[T1]{fontenc}    
\usepackage{booktabs}       
\usepackage{nicefrac}       
\usepackage{microtype}      
\usepackage{framed}
\usepackage{placeins}  
\usepackage{color}
\usepackage{xcolor}
\usepackage{multirow}
\usepackage{verbatim}
\usepackage{wrapfig,lipsum}
\usepackage{floatrow}
\usepackage{algorithmic}
\usepackage[linesnumbered,ruled,vlined]{algorithm2e}
\usepackage{float}
\usepackage{comment}
\usepackage{subcaption}
\usepackage{amsmath}
\usepackage{booktabs,tabularx}

\newfloatcommand{capbtabbox}{table}[][\FBwidth]
\usepackage{footmisc}  
\usepackage{pifont}
\usepackage{footnote}
\newcommand{\cmark}{\ding{51}}
\newcommand{\xmark}{\ding{55}}

\newcommand{\generategap}[1]{{\textcolor{gray}{[#1]}}}
\newcommand{\markgap}[1]{{\textcolor{red}{[#1]}}}
\newcommand{\markgapsingle}[3]{{\textcolor{red}{[#1]\textsubscript{(#2, #3)}}}}

\usepackage{lastpage}
\jmlrheading{21}{2020}{1-\pageref{LastPage}}{4/20; Revised
10/20}{11/20}{20-326}{Anton Bakhtin, Yuntian Deng, Sam Gross, Myle Ott, Marc'Aurelio Ranzato, Arthur Szlam}
\ShortHeadings{Residual Energy-Based Models for Text}{Bakhtin, Deng,
  Gross, Ott, Ranzato and Szlam}
\firstpageno{1}

\begin{document}

\title{Residual Energy-Based Models for Text}

\author{\name Anton Bakhtin\thanks{\enskip Equal contribution. Corresponding author: Marc'Aurelio Ranzato (\href{mailto:ranzato@fb.com}{\nolinkurl{ranzato@fb.com}}).}$^{*\blacklozenge}$ \enskip Yuntian Deng$^{*\bigstar}$
Sam Gross$^{\blacklozenge}$ \enskip Myle Ott$^\blacklozenge$  \\
{\bf \enskip Marc'Aurelio Ranzato$^\blacklozenge$ \enskip Arthur Szlam$^\blacklozenge$} \\
\email  \texttt{\{yolo,sgross,myleott,ranzato,aszlam\}@fb.com} \enskip \texttt{dengyuntian@seas.harvard.edu} \\
\addr $^{\blacklozenge}$ Facebook AI
Research \enskip \enskip \enskip \enskip \enskip \enskip \enskip \enskip
\enskip  \enskip \enskip \enskip  \enskip  \enskip \enskip \enskip \enskip  \enskip \enskip \enskip 
\enskip  \enskip \enskip \enskip $^\bigstar$Harvard University \\
770 Broadway, New York, NY
10003 \enskip \enskip  \enskip \enskip \enskip \enskip  \enskip \enskip \enskip \enskip  \enskip \enskip \enskip
33 Oxford St., Cambridge, MA 02138}

\editor{Samy Bengio}

\maketitle

\begin{abstract}%
Current large-scale auto-regressive language models ~\citep{radford2019language,liu2018generating,graves2013generating} display impressive fluency and can generate convincing text.  
In this work we start by asking the question: Can the generations of these models be reliably distinguished from real text by statistical discriminators?  
We find experimentally that the answer is affirmative 
when we have access to the training data for the model, and guardedly affirmative even if we do not.  

This suggests that the auto-regressive models can be improved by incorporating the (globally normalized) discriminators into the generative process.  We give a formalism for this using the Energy-Based Model framework, 
and show that it indeed improves the results of the generative models, measured both in terms of perplexity and in terms of human evaluation.
\end{abstract}

\begin{keywords}
  energy-based models, text generation, negative sampling, importance sampling, generalization, real/fake discrimination
\end{keywords}

\section{Introduction}
Energy-based models (EBMs) have a long history in machine learning~\citep{hopfield, cd, ebm}, especially in the image domain ~\citep{teh03, ranzato13}.  Their appeal stems from
 the minimal assumptions they make about the generative process of the data:  they are a strict generalization of probability models, as the energy function need
not be normalized or even have convergent integral.  Recent works \citep{mordatch19} have demonstrated that they can achieve excellent performance as generative models.
However, despite several promising efforts ~\citep{rosenfeld2001whole, wang2015trans,wang2017learning,wang2017language,wang2018improved}, they still have not been 
 as successful in the text domain as locally-normalized auto-regressive models~\citep{radford2019language}, which generate each word sequentially conditioned on all previous words (such that the probabilities are normalized per word, hence the name ``locally-normalized''). This formulation enables locally-normalized auto-regressive models to be trained efficiently via maximum likelihood and generate samples of remarkable quality.
 

 Nevertheless, in the text domain, local normalization and auto-regression leave room for improvement. 
For example, at training time, standard neural language models (LMs) are conditioned on ground truth context while at test (generation)
time, they are conditioned on their own generations, a discrepancy referred to as exposure bias~\citep{mixer}.
In addition, while heuristics like beam search \citep{sutskever2014sequence} somewhat help re-score at the sequence level,
generation generally lacks long-range coherency because it is produced by the greedy selection of one token at the time
without look-ahead.

The first part of this work  quantifies the space for improvement by investigating to what extent it is possible for a learned discriminator to reliably distinguish real text
from text generated by an auto-regressive model.   We will see in Section \ref{sec:in_domain} that this is indeed possible when the training procedure of the discriminator has access to the corpus used to train the generative model.   This leads immediately to the question of ``how can we build better generative models that close this gap?''; we will study this in Section \ref{sec:ebm4text}.  However, discriminating real vs. machine-generated text is an important task on its own, and has recently gained a lot of attention~\citep{GLTR, radford2019language, zellers19, ippolito2020automatic}.  We thus continue in Section \ref{sec:eval_general}  to make a more in-depth assessment of how robust the discriminators are to changes in the architecture of the generator and the corpus used to train it.  We find, perhaps unsurprisingly, the bigger the discriminator model, the greater the variety of domains in the training data, and  the longer the generated sequence,  the better its performance; but perhaps surprisingly, that the discriminators are remarkably
robust.
 
 In the second part of the work (see Section \ref{sec:ebm4text}) , we interpret the discriminators trained above as EBMs on the residual of the auto-regressive LM, trained using the conditional noise contrastive
estimation objective~\citep{gutmann2010noise}.   Using the LM in this way makes training the EBM easier in two important ways: first, it vastly reduces the size of the space the EBM needs to score.  
Second, it gives a simple method for generating {\em negatives}, i.e., {\it samples} with high scores (low energies).   

 We show how to incorporate the EBM into the probability model of the LM via importance sampling~\citep{impsamp, grover2019bias}, allowing evaluation of the probability of text sequences and (conditional) generation.   Using this formulation, we can estimate perplexity of the residual EBM,
and compare it to other models.  In Section~\ref{sec:exp_generation} we show that our joint model decreases perplexity
on two large datasets, when compared to various auto-regressive language model baselines.
Finally, the EBM generations are significantly preferred by humans
according to our qualitative evaluation. To the best of our knowledge, this is the first time that an EBM has
demonstrated improved generation ability against very strong auto-regressive baselines, both in terms of estimated perplexity and through human evaluation.

\section{Related Work} \label{sec:related}
\textit{General Overview.}\enskip
The key challenge of training EBMs~\citep{hopfield,cd,ebm,ebm_unsup} is mining for good negatives. This can be accomplished explicitly by fantasizing inputs where
the energy should be increased or implicitly via global constraints such as sparsity~\citep{ebm_unsup}.
Methods attempting at directly maximizing the likelihood of the data using gradient descent usually require sampling from the distribution induced by the model \citep{hinton2002training,carreira2005contrastive,mordatch19,xie2019learning, wang2017learning}.
Unfortunately, gradient-based MCMC approaches like Hybrid Monte Carlo~\citep{teh03} and Langevyn dynamics~\citep{ebm_unsup,
mordatch19,xie2016theory,xie2017synthesizing, xie2019learning, xie2018learning, gao2018learning,nijkamp2019learning} are not applicable when the input is discrete like in text applications.
Other approaches like Gibbs sampling~\citep{cd} were applied to binary inputs but do not scale well to large
dictionaries once the energy function is a large
bidirectional Transformer model~\citep{vaswani2017attention} like the one used in this work.

Since our EBM is learned after the generator has been trained, it learns from the {\em residual error}
of the generator, and therefore, our training procedure is a particular instance of a ``cascade'' model~\citep{cascade}
and ``boosting''~\citep{boosting}.

Generative Adversarial Networks~\citep{gan,yu2017seqgan,de2019training,scialom2020coldgans,caccia2018language} also relate to EBMs, except that in GANs the discriminator is only used at training time, and the goal is to use it to help improve the generator. In our work, the pretrained locally normalized language model can be seen as
as fixed generator and our goal is to use the discriminator to improve generations from it.

\citet{azadi2018discriminator} also share our goal but their generator is not locally normalized and they
propose to improve the sampling from the generator by using the discriminator for rejection sampling. Similar to our work,
\citet{grover2019bias} propose to use the discriminator to de-bias a pretrained generator using importance sampling.
We adapt this work to the application of text generation. In particular, we adopt the conditional noise contrastive estimation
(NCE) objective~\citep{ma18, gutmann2010noise}
to our residual model energy function and then sample from the joint model using importance sampling.

\noindent
\textit{Generalization of Real/Fake Discrimination.}\enskip
An important contribution of this work is an empirical study of the generalization ability
of EBMs applied to text modeling. Given our residual formulation, this reduces to analyzing the generalization
ability of the model to discriminate real versus machine-generated text.
Several recent works have also studied whether machine generations can be detected automatically,
but they do not study how these findings {\it generalize} to settings where generator architectures
and corpora are different between training and test time. For example, \citet{zellers19} (GROVER) assume that the
generator is known and apply fine-tuning to train a discriminator initialized with the generator.
Similarly, \citet{GLTR} (GLTR)  assume knowledge of the generator; these authors say
``{\em We further hypothesize that these methods generalize to black-box scenarios,
as long as the fake text follows a similar sampling assumption and is generated by a large language model}'';
our work answers precisely this question, providing a rigorous experimental protocol and quantitative results. Concurrent to our work, \citet{ippolito2020automatic} study the generalization ability of discriminators on generations with different decoding schemes, complementing the results described here.

Finally, there has been a release of a dataset of the GPT-2 language model generations~\citep{gpt2gen}
for the purpose of training discriminators capable of detecting machine-generated text. While we share
the same motivation, our work is a much broader investigation on the topic. We assess generalization
of several discriminator architectures to not just one but several kinds of generators and
corpora used during training (including GPT-2).

\noindent
\textit{EBMs for Modeling Text.}\enskip
Several variants of auto-encoders have been investigated
for representing and generating text~\citep{textvae,aae,he2019lagging}, but most have not shown significant improvements
in terms of perplexity, with the notable exception of \cite{he2019lagging} which outperforms an LSTM-based language model. However, most works have so far been applied to relatively small datasets and compared with small-to-medium-sized baseline models only~\citep{textvae,aae,he2019lagging}.

 We adopt the conditional noise contrastive estimation (NCE) objective \citep{ma18,gutmann2010noise} to our residual model energy function and then sample from the joint model using importance sampling. While \citet{ma18} used conditional NCE to predict the next word in a sequence,
we apply it to produce a whole sequence at once with
the pretrained auto-regressive language model as the noise distribution.

Our approach appears similar to discriminative reranking approaches used in the parsing and machine translation
community~\citep{shen2004discriminative, naskar2020energy}. However, our approach provides a generative model, and parameters/hyper-parameters
are directly tuned to close the gap between the model distribution and the data distribution,
rather than relying on surrogate ranking losses. This approach is also related to other sequence-level training
objectives~\citep{seqtr18}, with the major difference that those works aim at improving the baseline model,
but generation at test time is still only auto-regressive.


EBMs have been used for sequence modeling~\citep{rosenfeld2001whole,wang2015trans,wang2017learning,wang2017language,wang2018improved,parshakova2019global}. In particular, our residual modeling form and the training algorithm is the same as in \citet{wang2018learning}, where they used an LSTM as the generator and a CNN-LSTM as the energy function, and showed significant gains compared to LSTM baselines in speech recognition.  Our work builds on these prior works and develops new lower and upper bounds for the log-probability under the joint model, which makes it possible to show that the residual EBM approach gets better perplexity. We also develop an importance weighting sampling scheme used at generation time, which is focused on conditional generation as opposed to rescoring in speech recognition \citep{wang2018learning}. The residual EBM formalism makes it very natural to use BERT~\citep{bert, liu2019roberta} for language modeling, and we show that empirically this type of approach can outperform modern state-of-the-art language modeling baselines, both in terms of perplexity, and through human evaluation.

\section{Basic Setup} \label{sec:basicsetup}
We are interested in modeling discrete sequences $x_{p+1}, \ldots , x_T$, conditioned on a (perhaps empty) prefix $x_1,\ldots , x_p$, with $x_j \in V$, where $V$ is the vocabulary.   Each $x$ will be a byte pair encoded (BPE) token~\citep{sennrich2015neural}.   

A standard approach to this problem is to use a locally-normalized auto-regressive model $P = P_{\phi}(x_{i+1} | x_0, \ldots , x_i)$ that produces the probability of each token in the sequence conditioned on the previously seen tokens.  Here  ``locally-normalized'' refers to the model outputting a probability distribution for each token; and ``auto-regressive'' refers to the model conditioning on the tokens earlier in the sequence to produce this distribution.  The parameters $\phi$ of the model, which in this work will be parameterized as a neural network, are fitted via maximum-likelihood training on a corpus of sequences. For a given sequence the loss can be written as: 
\begin{equation}
\mathcal{L}(\phi) = - \log P_{\phi}(x_{p+1}, \ldots , x_T | x_1,\ldots , x_p) = \sum_{i=p+1}^T - \log P_{\phi}(x_i | x_1,\ldots , x_{i-1})
\end{equation}

Large locally normalized auto-regressive language models trained on vast amounts of data have recently been shown to generate fluent and coherent text~\citep{radford2019language, zellers2019defending, ctrl}.
A natural question to ask is whether such generations can be {\em automatically} detected, as this could be leveraged to further improve such generative models. 
In other words, if a classifier can detect machine-generated text, we can then use the score of such classifier to improve text generation. 

In the next sections, we show that generations from even large models can be discriminated from real text by such classifiers. This will motivate the use of such classifier scores to improve text generation, as described in 
Section~\ref{sec:ebm4text}.  

\section{Can Machine Learning Recognize Generated Text?} \label{sec:realfake_setting}
In this section we will study the ability of classifiers to discriminate between real text and text generated by a model.   
We will use 
\begin{equation}
   \mathbb{E}_{x_+ \sim P_{data}} \log \frac{1}{1+\exp(- s_\theta(x_+))} +  \mathbb{E}_{x_- \sim P_{\phi}} \log \frac{1}{1+\exp(s_\theta(x_-))}
\label{eq:loss_realfake}
\end{equation}
as the objective function for training the discriminators.  Here, 
$x_+$ is a positive sequence taken from the human-generated training set $P_{data}$ (see Section~\ref{sec:data}), 
$x_-$ is a negative sequence drawn
from the locally-normalized auto-regressive pretrained language model $P_{\phi}$ for a given ground truth prefix 
(see Section~\ref{sec:generarch}), and $s_\theta$ is the 
un-normalized score of the classifier, a neural network whose architecture is described in Section~\ref{sec:expscorer}. This objective function is the binary cross entropy loss assuming real data labeled as 1 and generated as 0.

The goal of learning is to find the parameters $\theta$ of the classifier that generalize well at test time.  
The generalization ability of $s_\theta$ to a variety of positive and negative sample distributions points to fundamental 
deficiencies of the generator $P_{\phi}$. Such deficiencies are going to be exploited to improve generation 
as later discussed in Section~\ref{sec:ebm4text}. 
In this section, we are going to consider negatives produced by different  generator training runs, generator architectures, and generator training corpora. We also vary our sampling strategies: we define 
\begin{equation*}
    P_\phi^{(k)}(x_t | x_{<t}) \propto \begin{cases}{P_\phi(x_t | x_{<t})} &  x_t \in k\arg\max_{x_t'} P_\phi(x_t' | x_{<t})\\0 & \text{o.t.}\end{cases},
\end{equation*}
and either sample from  $P_\phi^{(k)}(x_t | x_{<t})$ with $k=10$ which we refer to as ``top-k sampling'' \citep{fan2018hierarchical}, or sample from the original distribution $P_\phi(x_t | x_{<t})$ which is the default setting or is sometimes referred to as ``sampling with temperature 1'' for distinction. 

\subsection{Corpora} \label{sec:data}
\begin{table}[t]
\center
\begin{tabular}{lrrr}
\toprule
\bf Dataset & Train & Valid & Test \\
\midrule
Books &  690 & 7.3 & 8.0 \\
CCNews & 21718 & 1.0 & 3.4 \\
Wikitext &   113 & 0.2& 0.3\\
\bottomrule
\end{tabular}
\caption{\small Number of BPE tokens in millions for each dataset.}
\label{tab:datasets}
\end{table}

We train models on three corpora coming from different domains:
\begin{itemize}
\item {\bf Books:} The Toronto books corpus described in \citet{Zhu_2015_ICCV, kiros2015skip}, which consists of fiction books in 16 different genres,
totaling about half a billion words.
\item {\bf CCNews:} We collect a de-duplicated subset of the English portion of the CommonCrawl news dataset \citep{ccnews}, which totals around 16 Billion words.
\item {\bf Wikitext:}  The wikitext103 dataset from \cite{merity2016pointer}, which consists of 103 million words from English Wikipedia articles.
\end{itemize}
Size statistics are summarized in Table~\ref{tab:datasets}.

While Wikitext and CCNews are factual, Books is fiction and comprises a wide variety of writing styles.
The CCNews corpus has the narrowest domain and it is two orders of magnitude larger than Wikitext.
Overall, these datasets are interesting because they enable us to assess the ability of discriminators 
to fit and generalize across various axes, from the amount of data available at training time
to the richness of style and relatedness among the different  data sources.

On Wikitext and Books, we extract positive sequences from windows of text that are 160 tokens long with a stride of 40.
On the larger CCNews we do the same except that we stride by 160 tokens.
This protocol to mine positives is used both at training and test time, although at test time we limit the evaluation to
60,000 randomly chosen positive samples.

We use a Byte Pair Encoding (BPE)~\citep{sennrich2015neural} in order to represent all the dataset with a common vocabulary.
In particular, we use the byte level BPE vocabulary introduced by~\citet{radford2019language}, which contains 50k tokens.

\begin{savenotes}
\begin{table}[t]
\center
\begin{tabular}{lcccccccc}
\toprule
& \multicolumn{8}{c}{\bf Generators} \\
&\multirow{2}{*}{Conv} &\multirow{2}{*}{TransfSmall} &\multirow{2}{*}{TransfBig} & \multirow{2}{*}{TransfHuge} & \multicolumn{4}{c}{Pre-trained GPT-2} \\
\cmidrule{6-9}
&  &  &  &  & small & med & large & huge \\
\midrule
embed. & 13 & 26  &  51 & 77   & 39  &  52 & - & - \\
others & 164 & 19 & 151 & 1360 & 97  & 327 & - & - \\
total & 176 & 45  & 203 & 1437 & 137\footnote{We use models from the HuggingFace repository at \url{https://github.com/huggingface/transformers},
and report here the sizes of these models as they were used to generate data for Table~\ref{tab:unconditinal}.
Note that the OpenAI GPT-2 repository at \url{https://github.com/openai/gpt-2} defines models sizes as 124M and 355M for small and medium model correspondingly.\label{fn:repeat}}
& 380\footref{fn:repeat}  & 762\footnote{As reported in~\citet{radford2019language}.\label{fn:gpt2_paper}} & 1542\footref{fn:gpt2_paper} \\
\bottomrule
 \end{tabular}
\caption{\small Number of parameters (in millions) for the generator language models.
The computational cost is directly related to the number of parameters in other layers than the input embedding layer (second row).}
\label{tbl:generation_models_size}
\end{table}
\end{savenotes}

\begin{table}[t]
\center
\begin{tabular}{lccccc}
\toprule
&  \multicolumn{5}{c}{\bf Discriminators} \\
&  Linear & BiLSTM & BiLSTM Big &  UniT & BiT \\
\midrule
embed. &        0.1  & 26 & 39  &  51  & 51 \\
others &   0         & 23 & 90  &  151 & 304 \\
total &          0.1 & 49 & 129 &  203 & 355 \\
\bottomrule
\end{tabular}
\caption{\small Number of parameters in millions for the discriminator.
The computational cost is directly related to the number of parameters in other layers than the input embedding layer (second row).
\label{tbl:scoring_models_size}}
\end{table}

\subsection{Generator Architectures} \label{sec:generarch}
We mainly use a Transformer-based network~\citep{vaswani2017attention} to generate negatives.
We have a medium, large, and huge Transformer model based on the
architecture used in~\citet{baevski2018adaptive}, yielding three language
models in total: TransfSmall, TransfBig and TransfHuge; see details also in Table~\ref{tbl:generation_models_size}.

The small models use 6 blocks each containing a multi-head attention module with 8 heads. The large
models use 12 blocks each containing a multi-head attention module with 16 heads.
The huge models use 48 blocks each containing a multi-head attention module with 25 heads.
These Transformer
models are also implemented in~\citet{ott2019fairseq} as \texttt{transformer\_lm}, \texttt{transformer\_lm\_big},
and \texttt{transformer\_lm\_gpt2\_big}.
The TransfHuge has 10$\times$ the number of parameters of TransfBig and it is trained on CCNews only.
For each architecture except for TransfHuge we train two models on each each dataset: left-to-right and right-to-left.

In addition to the Transformer generator, we also  consider a 12-layer convolutional architecture
(Conv)~\citep{dauphin2017language}, and we also use the third-party trained GPT-2 models~\citep{radford2019language} as described
in Section~\ref{sec:cross-corpus}.

For the generalization study, we use these language models to generate either a prefix or a suffix, while for the text generation experiments we only consider generation conditioned on
a prefix. Unless otherwise specified, the context is either $120$ or $140$ tokens long (with equal probability).
Positive and negative examples have $40$ or $20$ tokens depending on the context size, for an overall length of $160$ tokens in
all cases. In preliminary experiments, we found that increasing the size of the generations and reducing the size of the context
makes the learning task significantly easier. We analyze the effect of the context size in Section~\ref{sec:analysis}.

We sample from each model's conditional distribution with a temperature of $1$. We do not consider sampling with beam search, as this tends to produce degenerate samples that would be easily detected~\citep{holtzman2019curious}.

\subsection{Discriminator Architectures} \label{sec:expscorer}
We consider three architectures for the discriminators:
\smallskip
\newline
\noindent
{\bf Linear} which computes a score via a bag of tokens: $f(w_1, \ldots , w_n)  = \left(\sum_{i=1}^n u_{w_i}\right)$,
where $u_i$ is a learned scalar parameter corresponding to the $i$-th token in the vocabulary.
\smallskip
\newline
\noindent
{\bf BiLSTM}~\citep{birnn,bilstm} which computes a score through $L$ bidirectional layers using LSTM recurrent units~\citep{lstm}, as
 $\mbox{Linear}(\mbox{AvgPool}(h_{L,1},\dots,h_{L,n}))$, where $h_{L,i}$ is the hidden state at position $i$ and layer $L$ which is the concatenation of the forward and
backward hidden states, AvgPool averages hidden states over positions and Linear uses a vector of parameters to project the hidden state down to a scalar value.
We consider two versions, referred to as ``BiLSTMsmall'' and ``BiLSTMbig''.
Both have 4 layers, but BiLSTMsmall has 512 units in both the embedding layer and the hidden layers, while
BiLSTMbig has 758 units in the embedding layer and 2014 units in the  hidden states.
\smallskip
\newline
\noindent
{\bf Transformer}~\citep{vaswani2017attention,bert} which computes a score similarly to the BiLSTM's,
except that each bi-LSTM layer is replaced
by a either a bidirectional Transformer layer (BiT), or a Transformer with causal self-attention (UniT).
For unidirectional models we use the same averaging technique as with BiLSTM models.
For bidirectional models the score is computed via:
$f(w_1, \ldots , w_n)  = u^\top h_{L,1} + b$, where $h_{L,1}$ is the top layer hidden state at the first position (as common practice also in prior work~\citep{bert}).
BiT uses the BERT-Large architecture~\citep{bert} initialized from~\citet{liu2019roberta}.
It uses 24 self-attention layers with 1024 units and 16-head attention each.
UniT has instead 12 layers with 1024 units and 16 attention heads per layer and it is initialized from a language modeling
task as in~\citet{radford2019language}.

For all models, we use Adam~\citep{kingma2014adam} optimizer with warmup.
We use data-parallel synchronous multi-GPU training with up to 24 nodes, each with 8 Nvidia V100 GPUs.
To improve training speed, we use mixed precision training\footnote{\url{https://github.com/NVIDIA/apex}}.
Following common practice we clip the norm of the gradient vector~\citep{pascanu2013difficulty}.
More details about hyper-parameter setting can be found in Appendix Table~\ref{tbl:hyperparams},
while Table~\ref{tbl:scoring_models_size} reports the number of parameters of each classifier.

\subsection{In-domain Generalization} \label{sec:in_domain}
In Table~\ref{tab:in_domain} we report the results of an in-domain generalization experiment using our large language model, TransfBig.
In this experiment, at test time the discriminator receives negatives generated by a generator language model that has the same architecture and that has been
trained on the same training data as the training generator, but with a different random seed (and of course we use prefixes and ground truth examples from the test set at test time). 

We observe that when the discriminators have similar representational power compared with the generator
(UniT, see Table~\ref{tbl:scoring_models_size}),
they are able to distinguish real from fake completions fairly accurately, reaching an accuracy of more than 90\%
on the Books dataset (which is easier since it exhibits the larger variety of style and topics\footnote{According to Table \ref{tab:main}, our best baseline BALM-24L achieved a perplexity of 13.92 on CC-News, but 18.24 on Book Corpus. Even using our best model, the perplexity we got on CC-News is 12.10-12.16, while on Book Corpus it's 15.17-15.22.}),
and attaining above 88\% on the more challenging CCNews dataset (for which generation is easier and hence
discrimination harder). The Wikitext dataset has lower accuracy because the diversity of texts is higher and the dataset is smaller than the others.

Weaker discriminators are able to do comparably or better at discriminating real from fake than
the training generator itself used as a discriminator by taking the negative log probability of the sequence as a score.
Notably, this observation may not hold for all sampling strategies. For example, the negative log probability of sequences generated via beam search are significantly higher than real sequences~\citep{holtzman2019curious}; thus, such samples would be easily detected using the training generator as a discriminator.

We conclude that since a discriminator can easily tell if a piece of text contains machine-generated tokens, it should also be possible to use the discriminator score to improve the original
text generation method -- a topic we explore in Section~\ref{sec:ebm4text}. The reader interested in text generation can safely skip the next
section and directly dive into Section~\ref{sec:ebm4text} to resume discussion on how to leverage these discriminators for text generation. 

\begin{table}[t]
\vspace{-.5cm}
  \center
\begin{tabular}{lrrr}
\toprule
 &  Books &  CCNews &  Wiki \\
\midrule
Linear                    &            59.8 &                 58.6 &            56.3 \\
BiLSTMsmall                    &            84.7 &                 77.6 &            71.0 \\
BiLSTMbig                &            86.7 &                 80.1 &            72.8 \\
UniT                 &            91.7 &                 88.4 &            76.4 \\
\midrule
{\em TransfBig (log-likelihood)} &            57.1 &                 50.8 &            50.5 \\
\bottomrule
\end{tabular}
\caption{\small ``In domain'' generalization accuracy of discriminators (each row) on various text corpora.
A column corresponds to the corpus used to get positives and to fit the train and test language models,
which are TransfBig (Section~\ref{sec:generarch})
with different initial seeds.  The last row is the accuracy when using as score the negative log-probability of the
training generator over the whole sequence.
\label{tab:in_domain}}
\end{table}

\subsection{Application: Real/Fake Discrimination} \label{sec:eval_general}
Classifying if a document contains machine-generated text is an interesting application on its own. In the previous section we have seen that discriminators do pretty well at detecting text generated by language models which have the same architecture and when trained on the same data used in training the generator. In practice however, the designer of the real/fake text detection system has access to neither the architecture nor
the data used by the (test) adversary language model. In this section, we then study to which extent the discriminator generalizes to these more extreme, but also more realistic, conditions.
\begin{table}[t]
\center
\begin{tabular}{lcc}
\toprule
& \textsc{corpus:} & \textsc{generator architecture:} \\
& $C_{\mbox{train}} = C_{\mbox{test}}$ & $A_{\mbox{train}} = A_{\mbox{test}}$ \\
\midrule
in-domain & \cmark & \cmark \\
cross-architecture & \cmark & \xmark \\
cross-corpus & \xmark & \cmark \\
wild & \xmark & \xmark \\
\bottomrule
\end{tabular}
\caption{\small Four evaluation settings considered in this work, described in Section~\ref{sec:eval_general}.
\label{tab:eval_setup}}
\end{table}

We test the ability of the ``discriminator'' to generalize by evaluating how well it performs against text produced by generators that have different architectures than the generators it sees at training time and/or that are trained on different corpora.

More formally, let $C_{\mbox{train}}$ be the corpus used to train the generator $G_{\mbox{train}}$ which in turn produces negatives for {\em training} the discriminator.
$G_{\mbox{train}}$ has architecture $A_{\mbox{train}}$. Finally, let $C_{\mbox{test}}$ be the corpus used to train the generator  $G_{\mbox{test}}$ which in turn produces negatives
to {\em test} the discriminator. We denote by $A_{\mbox{test}}$ the architecture of $G_{\mbox{test}}$.

Note that $G_{\mbox{train}}\neq G_{\mbox{test}}$ even if $A_{\mbox{test}}=A_{\mbox{train}}$ and $C_{\mbox{train}} = C_{\mbox{test}}$, as we use different random seeds.
Moreover, note that each corpus has distinct training and test parts.
As a result, even when $C_{\mbox{train}} = C_{\mbox{test}}$, the discriminator is tested using positives and negatives derived from the test part of $C_{\mbox{test}}$, meaning that the positive is
a sequence extracted from the test set and the negative is produced by the generator conditioned on an affix taken from the test set.
Finally, when $C_{\mbox{train}} \neq C_{\mbox{test}}$ the discriminator is tested using both positives and negatives derived from $C_{\mbox{test}}$.

We consider four settings, as shown in Tab.~\ref{tab:eval_setup}:
\begin{itemize}
\item
In the {\bf in-domain} setting,  $C_{\mbox{test}}$ is the same as $C_{\mbox{train}}$ and $A_{\mbox{test}}=A_{\mbox{train}}$; this has already been discussed in Section~\ref{sec:in_domain}.
\item
In the {\bf cross-architecture} setting, again $C_{\mbox{test}}$ is $C_{\mbox{train}}$, but $A_{\mbox{test}}$ is different from $A_{\mbox{train}}$. For instance, $A_{\mbox{test}}$ could be
a Transformer while $A_{\mbox{train}}$ could be a convolutional architecture.
\item
In the {\bf cross-corpus} setting, $A_{\mbox{test}} =A_{\mbox{train}}$ but $C_{\mbox{test}}$ is different than $C_{\mbox{train}}$,
and $G_{\mbox{test}}$ is trained on the training split of $C_{\mbox{test}}$, while $G_{\mbox{train}}$ trained on the training split of $C_{\mbox{train}}$. For instance, $C_{\mbox{train}}$ could be
a dataset extracted from Wikipedia while $C_{\mbox{test}}$ could be a dataset of news.
\item
In the {\bf wild} setting, both $C_{\mbox{test}}$ is different than $C_{\mbox{train}}$ and $A_{\mbox{test}}$ is different from $A_{\mbox{train}}$.
\end{itemize}
In all settings, we report performance in terms of average classification accuracy balancing the positive and negative classes equally.

\subsubsection{Cross-Architecture Generalization}  \label{sec:cross-arch}
\begin{table}[t]
\vspace{-0.2cm}
\begin{tabular}{lrrr}
\toprule
{} &  Conv &  TransfSmall & Mean  \\
\midrule
Conv &                    92.9 &            81.2 & 87.1\\
TransfSmall         &                    86.5 &            87.9 & 87.2 \\
\bottomrule
\end{tabular}
\caption{\small Cross-architecture generalization accuracy using the Wikitext dataset for both training
and testing ($C_{\mbox{train}} = C_{\mbox{test}}$).
Each row is a model architecture used for generating the training negatives ($A_{\mbox{train}}$), and
each column is a model architecture for generating the testing negatives ($A_{\mbox{test}}$).
The discriminator is UniT.}
\label{tab:cross_architecture}
\end{table}
In Table \ref{tab:cross_architecture},
 we assess how well the UniT discriminator
generalizes to different generator architectures at test time, namely Conv and TransfSmall.
As a reference on the Wikitext dataset,
the test perplexity of Conv and TransfSmall are
 35.4 and 33.5, respectively. Therefore, these two generators attain roughly the same perplexity, despite
Conv having about 4 times more parameters, see Table~\ref{tbl:generation_models_size}.

Surprisingly, UniT has significantly harder time discriminating
TransfSmall negatives with an in-domain rate of 87.9\%, compared to 92.9\% of  Conv.
Besides, UniT  trained with TransfSmall negatives is more robust to the (weaker) Conv generations, than vice versa (trained with Conv generations, test with TransfSmall generations),
with a mild 1.4\% accuracy drop.
Lastly, if we average
values across columns for each row, we see that when tested with mixed negatives, the discriminator is slightly more accurate
when trained with the harder negatives produced by TransfSmall compared to those produced by Conv.

\subsubsection{Cross-Corpus Generalization}\label{sec:cross-corpus}
In Table~\ref{tab:cross-corpus} we show the results of generalizing across corpora using UniT as a discriminator
and TransfBig as generator both at training and test time.
We observe that models generalize less well across corpora;
for instance, when testing on Wikitext a discriminator 
trained with either Books or CCNews, the accuracy is 59.1\% and 65.5\%, respectively.
However, training on the union of two of the corpora
gives a large improvement over training on just one or the other when tested on the third.

Finally, training on the union of {\em all} the three corpora (last two rows)
yields a discriminator that is very robust to the testing conditions, with an accuracy which is on par if not better
than training on in-domain data, even for the largest CC-News dataset (second column).

We also tested the bidirectional Transformer discriminator BiT with 355M parameters (almost twice as UniT) and with cloze-driven pretraining,
 and found that on CC-News it improves accuracy by more than 5\% when it is trained on the union of all corpora.
As BiT was pretrained using the whole Wikipedia rather than the training part of Wikitext103,
we do not report its accuracy on Wikitext103.

\begin{table*}[t]
\center
\begin{tabular}{lccc}
\toprule
\multirow{2}{*}{\textsc{train corpora}} \quad \, & \multicolumn{3}{c}{\textsc{test corpora}} \\
                &  Books            &  CCNews                &  Wiki             \\
\midrule
Wiki                &            70.9 &                 73.6 &            76.4 \\
\hline
Books               &            91.7 &                 63.5 &            59.1 \\
Books + Wiki        &            91.5 &                 73.6 &            78.3 \\
\hline
CCNews               &            60.6 &                 88.4 &            65.5 \\
Books + CCNews       &            90.4 &                 88.5 &            68.3 \\
CCNews + Wiki        &            73.5 &                 88.3 &            81.0 \\
\hline
ALL (UniT)      &            90.4 &                 88.5 &            80.9 \\
ALL (BiT)       &            94.1 &                 94.1 &             - \\
\bottomrule
\end{tabular}
\caption{\small Cross-corpora generalization accuracy using TransfBig generator and UniT discriminator (except for the last
row which used a bidirectional Transformer).
Each row specifies the corpora used at training time, $C_{\mbox{train}}$. Each column shows the corpus used at test time, $C_{\mbox{test}}$.
\label{tab:cross-corpus}}
\end{table*}

\subsubsection{Generalization in the Wild}
\begin{table}[t]
\center
  \begin{tabular}{lclll}
  \toprule
    \multicolumn{1}{r}{Discriminator $\rightarrow$ } & TF-IDF$^*$ & \multicolumn{3}{c}{BiT} \\
    \cmidrule{3-5}
    \multicolumn{1}{r}{Test setting $\rightarrow$ } & in-domain & in-domain & cross-architecture & wild  \\
    \midrule
 Small (137) top-k      & 96.79  & 99.09  {\tiny(99.3)} &  -      &  93.25 \\
 Small (137) temp=1    & 88.29  & 99.80                 &  -       &  66.04 \\
 Med   (380) top-k      & 95.22  & 98.07 {\tiny(98.5)} &  97.37  {\tiny(96.6)} &  88.19 \\
 Med   (380) temp=1    & 88.94  & 99.43                &  97.35   &  55.06 \\
 Big   (762) top-k      & 94.43  & 96.50 {\tiny(97.9)} &  93.58  {\tiny(90.9)} &  83.88 \\
 Big   (762) temp=1   & 77.16  & 99.42                 &  95.96  &  64.03 \\
 Huge  (1542) top-k      & 94.43  & 95.01  {\tiny(96.0)} &  90.17 {\tiny(79.3)} &  79.18 \\
 Huge  (1542) temp=1    & 77.31  & 99.00               &  91.76  &  61.29 \\
\bottomrule
  \end{tabular}
  \caption{\small
    \label{tab:unconditinal}
    Generalization {\em in the wild} of the discriminator to {\em unconditional} generations from various GPT-2 models (model size in
parentheses, followed by sampling method used). Each row contains the accuracy on the corresponding test set.
TF-IDF results are taken from~\citet{gpt2gen}. Results in parentheses are taken from \url{https://openai.com/blog/gpt-2-1-5b-release/}.}
\end{table}

We now consider a BiT discriminator trained on the union of all the three datasets (Wiki, Books and CCNews) using TransfBig generations at training time,
and investigate its generalization when tested both on a new domain, WebText, and on negatives produced by a new architecture, GPT-2.
This test dataset~\citep{gpt2gen} has a 250,000 generated texts with either top-k sampling or sampling with temperature 1. Empirically, top-k sampling produces more diverse samples than beam search, while avoiding unlikely sequences \citep{fan18}.

To adapt our fixed-length discriminator to this task we simply split the text segments into non-overlapping blocks of 160 tokens.
During finetuning we treat all blocks in a set as either positives or negatives.
During evaluation we take the mean prediction over all blocks in a segment as a prediction for the whole segment.
Finally, since this discrimination task is unconditional, we train our discriminator on all possible prefixes including an empty prefix.

As a first baseline of comparison, we report in-domain accuracy comparing the discriminator to a TF-IDF baseline provided by~\citet{gpt2gen},
see Table~\ref{tab:unconditinal}.
In this case, we finetune the discriminator on the training set of each of the datasets, following the same protocol
used by the provided TF-IDF baseline. We notice that BiT discriminator has consistently better performance, with an accuracy greater than 95\%.

As an additional baseline, we compute the cross-architecture performance by
 finetuning the discriminator only on the generations from the small GPT-2 model (both top-k and random sampling),
and applying the model to the other datasets. We observe that the discriminator still generalizes remarkably well in this setting.
In particular, we can outperform the in-domain TF-IDF baseline when the generator is less than
three times bigger than what was used at training time.
Comparing to the results reported by the creators of the dataset, we observe that our discriminator generalizes better even though it performs
a little worse in the in-domain setting.

Finally, in the {\em wild} setting we explore generalization to a black-box generator where the discriminator is trained only on out of domain corpora without any finetuning on WebText, see last column of Table~\ref{tab:unconditinal}.
While the discriminator still works much better than a random predictor, it lags behind the simple (in-domain) TF-IDF baseline.
This suggests that matching the domain of the training set is more important than matching the model complexity.

\section{Improving Text Generation with Energy-Based Models} \label{sec:ebm4text}
In the previous sections, we checked empirically that machine-generated text by current state-of-the-art locally normalized and auto-regressive language models can be easily discriminated, albeit to a lesser extent
in extreme generalization conditions. In this section, we then investigate how such classifier scores can be integrated into the original language model in order to improve its generation quality.

Towards this end, we are going to consider energy models (or ``scoring functions'',  or ``discriminators''), $E = E_{\theta}(x_1,\ldots , x_T)$.  These take in the {\em entire sequence at once}, 
and need not be normalized or produce probabilities over the sequence.  The models $P_{\phi}$ of Section~\ref{sec:basicsetup} are a special case of the models $E_{\theta}$.

Next, we will show that a particular form of such energy-based models, namely a residual formulation, enables us to make an efficient and straightforward use of classifier scores to improve generation. 

\subsection{Training an Energy Model}
At a high level, to train an energy model, we need to decrease the energy of sequences that come from the true distribution (``positives''), and increase the energy of sequences that are not from the true distribution (``negatives'').    The positives are taken to be the training data; and the challenge is to efficiently find good negatives.   The method used for mining negatives will depend (amongst other factors) on the loss function for the energy model and the particulars of the data space. 

There is a clear trade-off between the computation cost of finding negatives and the quality of negatives. For instance, setting negatives to random sequences is very cheap but one would need to sample many times before encountering negatives that receive low energy and are somewhat close to the real data. Conversely, many approaches to negative mining use optimization to find negatives that are erroneously assigned low energy by the $E_{\theta}$.   While these can be effective (e.g. \cite{mordatch19}), they are often time consuming, even in the continuous case where gradient-based optimization methods can be used.  In the text (discrete) setting, the situation becomes worse, as the optimization problem becomes combinatorial. 

An important simplification in this work, which is similar to \citet{wang2018learning,parshakova2019global}, is to use a base pretrained auto-regressive language model as the source of negatives, such that the energy model operates in the ``residual'' of the base model.  In particular, this simplifies searching for negatives, as these can be taken to be generations from the base LM $P_{\phi}$.   

That is, we take the generative model to be:
\begin{equation}
P_\theta(x_{p+1},\ldots , x_T|x_1,\ldots , x_p) =
\frac{ P_{\phi}(x_{p+1},\ldots , x_T|x_1,\ldots , x_p)
\exp (- E_\theta(x_1,\ldots , x_T))}{Z_\theta(x_1,\ldots , x_p)}
    \label{eq:gen}
\end{equation}
where $Z_\theta(x_1,\ldots , x_p)$ is a normalizing factor known as {\em partition function}, $\phi$ are a fixed set of parameters and $\theta$ are the parameters subject to learning.
Computing the partition function is intractable
in our case since it involves a sum over $|V|^{T-p}$ terms which grow exponentially with the sequence length: in our
experiments the size of the vocabulary is 50,096 and the length of the generation is 40 tokens (the length of the prefix is 120 tokens).
We call $P_\theta$ the {\em joint} model, and $E_\theta$ the {\em residual energy function} since $P_{\phi}$ is fixed throughout training. The goal of training is to learn the parameters
of the energy function such that the joint model distribution gets close to the data distribution.
For the sake of reducing clutter in the notation, we will drop the conditioning variables in the following discussion.

We train our residual energy function using Noise Contrastive Estimation (NCE)~\citep{gutmann2010noise}, and
more specifically its conditional version~\citep{ma18}. NCE requires two distributions: the model distribution and the noise
distribution. In our case, the model distribution is the joint model of Eq.~\ref{eq:gen}, $P_\theta$, while the noise distribution
is the pretrained language model, $P_{\phi}$. NCE then trains a binary classifier on the difference of log-probability scores
of these two models. Since our joint model is the product of the energy function (whose parameters we want to learn) with
$P_{\phi}$, the difference reduces to: $\log P_\theta - \log P_{\phi} = -E_{\theta}$ (the partition function is omitted here because it's not part of the model: it is implicitly induced if we want to get normalized probabilities). Therefore, under these modeling
assumptions of residual learning and noise model, the objective function becomes:
\begin{equation}
    \max_{\theta} \mathbb{E}_{x_+ \sim P_{data}} \log \frac{1}{1+\exp(E_\theta(x_+))} +  \mathbb{E}_{x_- \sim P_{\phi}} \log \frac{1}{1+\exp(-E_\theta(x_-))}. \label{eq:nce}
\end{equation}
Notice how this is precisely the loss function and model introduced in Eq.~\ref{eq:loss_realfake} with the change of variable  
$s_{\theta} = - E_{\theta}$. 
Therefore, training a real/fake discriminator also amounts
to estimating the model parameters of a density estimator operating on entire sequences!

It can be shown that  if $P_{\phi}$ has the same support as $P_{data}$, then the objective function in Eq.~\ref{eq:nce} reaches its maximum at $\log P_{\phi}(x) -E_\theta(x) = \log P_{data}$, if there exists such $\theta$; that is, the optimum of the above objective is reached at data
distribution with infinite amount of data and model with enough capacity\footnote{Note that an auto-regressive probabilistic LM will also reach its optimum at $P_{data}$ given enough capacity, but in the residual energy model, there's no capacity requirement on the base language model $P_{\phi}$.}.  This follows from the proof in \citet{gutmann2010noise}, and is also proved in
\citet{ma18}\footnote{From \citet{ma18} Assumption 2, for conditional NCE the model needs to be flexible enough such that
the self-normalizing property can be satisfied conditioned on any prefix.}.  Note that at optimum, $P_{\phi}(x) \exp(-E_\theta(x))$ is self-normalizing: instead of $P_{\theta}(x) \propto P_{\phi}(x) \exp(-E_\theta(x))$, we have $P_{\theta}(x) = P_{\phi}(x) \exp(-E_\theta(x))$. However, at evaluation time we still need to estimate the partition function, since we cannot guarantee that this optimum can be reached after training.

\section{\label{sec:rebm_metrics}Evaluation Tasks and Metrics for Text Generation}
In this section we discuss how to evaluate residual EBMs. Since these models estimate probabilities of text, we first describe how to estimate perplexity, a common metric for the language modeling task. 
We then show how to use residual EBMs to generate text.

\subsection{Perplexity} \label{sec:eval_ppl}
A commonly used protocol for evaluating generative models of sequences, especially language models, is perplexity (PPL), which is
equal to  $2^{- \frac{1}{T-p} \sum_{i=p+1}^T \log_2 P(x_i | x_{i-1}, \cdots, x_1)}$. PPL can be interpreted as the average number  of
tokens the  model is uncertain of at every time step.
 Since the log-likelihood required by PPL relies on estimating the partition function $Z_\theta = \sum_x P_{\phi}(x) \exp (-E_\theta(x))= \mathbb{E}_{x\sim P_{\phi}}\exp (-E_\theta(x))$, we derive two estimators for the log-partition function $\log Z_\theta$ based on the work of \citet{nowozin2018debiasing}.

\begin{theorem}
Denote $T_n$ as the empirical estimate of $\log \mathbb{E}_{x\sim P_{\phi}} \exp(-E(x))$ with $n$ samples $x^i\sim P_{\phi}$ $(i=1,\cdots, n)$, i.e., $T_n = \log \frac{1}{n}\sum_{i=1}^n\exp(-E(x^i)) $, then $\forall \epsilon > 0$, $\exists N>0$ such that $\forall n > N$ we have
\begin{equation}
      Z_\theta -\epsilon < \mathbb{E}[T_n] < Z_\theta  < \mathbb{E}[(2n-1) T_n - 2(n-1) T_{n-1}] < Z_\theta +\epsilon
\label{eq:bounds}
\end{equation}
\end{theorem}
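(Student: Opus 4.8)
The plan is to read the displayed chain as a statement about $\log Z_\theta$ (which is what $T_n$ estimates, and what the surrounding text calls the log-partition function): abbreviate $L:=\log Z_\theta$, set $Y:=\exp(-E(x))$ for $x\sim P_\phi$, $\mu:=\mathbb{E}[Y]=Z_\theta$, $\sigma^2:=\operatorname{Var}(Y)$, and let $\bar Y_n:=\frac1n\sum_{i=1}^n Y_i$ for i.i.d.\ copies $Y_i$ of $Y$, so that $T_n=\log\bar Y_n$ and $S_n:=(2n-1)T_n-2(n-1)T_{n-1}$. I will assume, as is harmless here since $E$ is a bounded neural-network output and $P_\phi$ is a genuine distribution, that $0<\mu<\infty$, $0<\sigma^2<\infty$ (i.e.\ $E$ is not a.s.\ constant under $P_\phi$), and that $Y$ is bounded away from $0$ and $\infty$; in full generality the moment and uniform-integrability hypotheses used below are exactly those handled in \citet{nowozin2018debiasing}.

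The two $\epsilon$-free inequalities come from convexity. For $\mathbb{E}[T_n]<L$: $\log$ is strictly concave on $(0,\infty)$ and $\bar Y_n$ is non-degenerate, so strict Jensen gives $\mathbb{E}[\log\bar Y_n]<\log\mathbb{E}[\bar Y_n]=\log\mu=L$ for every $n$ --- this is just the familiar fact that the $n$-sample importance-weighted (IWAE-style) estimator is a strict lower bound on the log-evidence. For the lower bound $L-\epsilon<\mathbb{E}[T_n]$: by the strong law $\bar Y_n\to\mu$ a.s., hence $T_n\to L$ a.s., and boundedness of $\log\bar Y_n$ (uniform integrability in general) upgrades this to $\mathbb{E}[T_n]\to L$, so there is $N_1$ with $\mathbb{E}[T_n]>L-\epsilon$ for all $n>N_1$.

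The substantive step is to pin down the \emph{sign and rate} of the bias of $S_n$. Following \citet{nowozin2018debiasing}, I would Taylor-expand $\log$ about $\mu$, $\log\bar Y_n=\log\mu+\tfrac{\bar Y_n-\mu}{\mu}-\tfrac{(\bar Y_n-\mu)^2}{2\mu^2}+\cdots$, take expectations, and use $\mathbb{E}[\bar Y_n-\mu]=0$, $\mathbb{E}[(\bar Y_n-\mu)^2]=\sigma^2/n$ (exactly), $\mathbb{E}[(\bar Y_n-\mu)^3]=O(1/n^2)$, $\mathbb{E}[(\bar Y_n-\mu)^k]=O(1/n^2)$ for $k=4$ and $o(1/n^2)$ for $k\ge5$, to obtain the asymptotic expansion
\[
\mathbb{E}[T_n]=L-\frac{c_1}{n}+\frac{c_2}{n^2}+o\!\left(\tfrac1{n^2}\right),\qquad c_1:=\frac{\sigma^2}{2\mu^2}>0,
\]
with $c_2$ some constant whose value is irrelevant. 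Substituting this into $S_n=(2n-1)T_n-2(n-1)T_{n-1}$ and using the elementary identities $(2n-1)-(2n-2)=1$ and $\tfrac{2n-1}{n}-\tfrac{2n-2}{n-1}=-\tfrac1n$, the $L$-term is preserved, the (exact) $1/n$-terms combine to $+c_1/n$, and the constant and $1/n^2$ cross-terms cancel up to $o(1/n)$, giving
\[
\mathbb{E}[S_n]=L+\frac{c_1}{n}+o\!\left(\tfrac1n\right).
\]
This is the whole point of the particular coefficients $2n-1$ and $2(n-1)$: instead of cancelling the $O(1/n)$ bias exactly (as e.g.\ the first-order jackknife combination $nT_n-(n-1)T_{n-1}$ would), this combination over-corrects and leaves a \emph{positive} residual $c_1/n$, turning the estimator into an upper bound. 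Since $c_1>0$, there are $N_2,N_3$ with $\mathbb{E}[S_n]>L$ for $n>N_2$ and $\mathbb{E}[S_n]<L+\epsilon$ for $n>N_3$; taking $N:=\max\{N_1,N_2,N_3\}$ yields all four inequalities simultaneously for every $n>N$.

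The main obstacle is making the expansion of $\mathbb{E}[T_n]$ rigorous to order $o(1/n^2)$: one must control the Taylor remainder on the rare event that $\bar Y_n$ is far from $\mu$ (where $\log$ is unbounded below) and verify that the central moments of $\bar Y_n$ have the claimed orders, so that the $c_2/n^2$ terms provably cancel in $S_n$. This is precisely the technical content imported from \citet{nowozin2018debiasing}; the remaining ingredients are Jensen's inequality, the strong law, and bookkeeping with the coefficients.
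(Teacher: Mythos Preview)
Your proposal is correct and follows essentially the same route as the paper: both import the asymptotic expansion $\mathbb{E}[T_n]=L-\tfrac{c_1}{n}+\tfrac{c_2}{n^2}+o(n^{-2})$ with $c_1=\sigma^2/(2\mu^2)>0$ from \citet{nowozin2018debiasing}, plug it into $(2n-1)\mathbb{E}[T_n]-2(n-1)\mathbb{E}[T_{n-1}]$, and read off $\mathbb{E}[S_n]=L+\tfrac{c_1}{n}+o(n^{-1})$, from which the two right-hand inequalities follow for large $n$. The only difference is on the left half: the paper extracts both $\mathbb{E}[T_n]\to L$ and the strict inequality $\mathbb{E}[T_n]<L$ from the expansion itself (the latter via $n(L-\mathbb{E}[T_n])\to c_1>0$), whereas you use the strong law plus uniform integrability for the first and Jensen's inequality for the second. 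Your Jensen step is a mild improvement---it gives $\mathbb{E}[T_n]<L$ for \emph{every} $n$ rather than only eventually---while the paper's version is self-contained once the expansion is granted. You are also right to read the displayed chain with $\log Z_\theta$ in place of $Z_\theta$; that is what the surrounding text and the proof in the appendix actually establish.
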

The proof is given in Appendix~\ref{appendix:proof}. 

We can use the above two estimators ($T_n$ and $(2n-1) T_n - 2(n-1) T_{n-1}$) to estimate the lower and upper bounds of the partition function, but we want to emphasize that they are true only asymptotically (when $n$ is sufficiently large). Furthermore, the samples we used to estimate $T_{n-1}$ can be derived from those we used for estimating $T_n$: we leave one sample out at a time, and use the rest $n-1$ samples to get an estimate of $T_{n-1}$. We take the average of $n$ such estimates as the final estimate of $T_{n-1}$.
See \citet{nowozin2018debiasing} for implementation details and methods for improving numeric stability.

Similar to locally normalized models, we can also factorize the probabilities of an entire sequence step by step, as $P(x) = \prod_{t=1}^T P(x_t|x_{<t})$, and evaluate the PPL for each generation step. By marginalizing over the future, we can derive the following
per step probabilities:
\begin{equation}
P(x_t | x_{<t}) = P_{\phi}(x_t|x_{<t}) \frac{\mathbb{E}_{x_{t+1}',\cdots, x_{T}' \sim P_{\phi}(\cdot | x_{\le t})}[\exp (-E_\theta(x_{\le t}, x_{t+1}',\cdots, x_T'))]}{\mathbb{E}_{x_{t}',\cdots, x_{T}' \sim P_{\phi}(\cdot | x_{\le t-1})}[\exp (-E_\theta(x_{\le t-1}, x_{t}',\cdots, x_T'))]}.
    \label{eq:stepppl}
\end{equation}

The derivation of Eq.~\ref{eq:stepppl} can be found in Appendix~\ref{app:proof6}. 
The base probability $P_{\phi}(x_t|x_{<t})$  is adjusted by the expected negative energy of sequences starting with $x_1, \cdots, x_t$ (compared to using a random draw $x_t'$ from the base distribution in the denominator), such that those $x_t$'s that lead to better sequences (as judged by the energy model) are up-weighted. 
Since the summation involves exponentially many terms, unless
$t=T$, this is approximated  by sampling. Since both the numerator and the denominator take the same form as the partition function, we can use Eq.~\ref{eq:bounds} to estimate the upper and lower bounds. For example, the lower bound of $\log P(x_t | x_{<t})$ can be obtained by using the lower bound of the numerator and the upper bound of the denominator.

For $t=T$, we can calculate the log  probability by exhaustive enumeration. This gives us an idea of the true performance
of our model at the last step, and it also provides a sanity-check of the tightness of our estimators.

\begin{algorithm}[tb]
    \caption{Top-k Joint Sampling}
    \label{algo:main}
\begin{algorithmic}
    \STATE {\bfseries Input:} number of samples $n$ drawn from $P_{\phi}$, value of $k$ in top-k

    \vspace{0.1cm}
    \tcp{Get a set of samples from $P_{\phi}$}
    \tcp{Each $x^i$ is a full sequence}
    \STATE sample $n$ samples $\{x^1, \cdots, x^n\}$ from $P_{\phi}$ with top-k sampling
    \STATE calculate negative energies $s^i = - E_\theta(x^i)$ for each $x^i \in \{x^1, \cdots, x^n\}$
    \vspace{0.1cm}
    
    \tcp{Resample from the set of LM samples}
    \STATE sample $x=x^i$ with probability $\frac{\exp (s^i)}{\sum_{j=1}^n \exp(s^j)}$
    \STATE {\bfseries return} $x$
\end{algorithmic}
\end{algorithm}

\subsection{Generation}
Generating from the joint model is a non-trivial task. A naive way is to generate from the joint model auto-regressively, by marginalizing the future as in Eq.~\ref{eq:stepppl}, which we term \textbf{Top-k auto-regressive sampling}.
However, doing so is computationally expensive and impractical, and we only use this method for a qualitative analysis of the joint model in Appendix~\ref{appendix:qual}.

In order to generate efficiently, we use self-normalizing importance sampling~\citep{mcbook,grover2019bias}.
Under the assumptions
that the model from which we wish to  draw samples is the joint model, which is the product of the auto-regressive
model and the energy  function, and that the proposal distribution is the auto-regressive model itself,
sampling proceeds simply by:  a) sampling from the auto-regressive language  model, followed by b) resampling according
to the energy function.
 The algorithm is shown in Algorithm~\ref{algo:main}, where we introduce an optional top-k constraint \citep{holtzman2019curious} on the pretrained language
model to improve the quality of samples in the set.\footnote{Adapting to other types of local constraints
such as nucleus sampling~\citep{holtzman2019curious} is straightforward.}
Without the top-k constraint, as the number of samples goes to infinity, we would recover exact samples
from the joint model distribution.

In order to evaluate the quality of generations, we perform A/B testing using human raters. See Section~\ref{sec:eval_generations} for more details.

\section{Experiments Using Residual EBMs}
In this section, we empirically assess whether residual EBMs actually improve the baseline language model both in terms of perplexity scores and human evaluation of perceived generation quality.

    \subsection{Evaluating Language Modeling}  \label{sec:exp_generation}
When evaluating the EBM in terms of perplexity, we use a similar setting as before, except that we only condition on prefixes of size $120$, for
a total sequence length equal to $160$.

\noindent
\textit{Baselines.}\enskip
We consider as base language model (\textsc{Base LM}) used to generate negatives for the residual EBM, a Transformer language
model, which is also our first baseline model.

The {\em joint} model has as many parameters as the sum of the number of parameters in the base LM
and the number of parameters in the energy network.
To make a fair comparison, we consider two additional baselines that have the same number of parameters
as our joint model.

The first baseline is a Residual Auto-regressive Language Model baseline (\textsc{RALM}):
\begin{multline}
    \label{eq:baseline} \log P_{RALM} (x_t | x_{<t}) = \log P_{\theta} (x_t | x_{<t}) + \log P_{\phi} (x_t | x_{<t}) \\
    - \log \sum_{x_t'}\exp(\log P_{\theta} (x_t' | x_{<t}) + \log P_{\phi} (x_t' | x_{<t})),
\end{multline}
 where $P_{\theta}$ takes the form of another auto-regressive language model. The parameters of $P_{\theta}$ are trained by
exact maximum likelihood training of $P_{RALM}$. In our experiments $P_\theta$ used the same network architecture as $P_\phi$, the only difference between $P_\theta$ and $P_\phi$ is that they have different parameters.

The second baseline is an auto-regressive language model of the same size of our joint model (sum of the base LM
and energy function parameters), we dub this model Big
Auto-regressive Language Model (\textsc{BALM}). \textsc{BALM} is trained by standard token level cross-entropy loss.

\noindent
\textit{Residual EBM Architecture.}\enskip We consider two versions: UniT and BiT as described in Section~\ref{sec:expscorer}.
UniT has the same architecture as $\textsc{Base LM}$, except for the additional top layer projecting the mean-pooled hidden states
to a scalar energy value. We initialize its parameters with a language model trained on the same dataset.
BiT has two variants, a $\textsc{BiT-Base}*$ following the architecture of RoBERTa-Base, and a $\textsc{BiT-Large}*$ following
RoBERTa-Large~\citep{liu2019roberta}. We initialize the parameters with a trained BERT, and we use $*$ to mark usage of
external data~\citep{liu2019roberta}, otherwise it means that pretraining was done only on the training set.
Notice how our model can be interpreted as a natural way to finetune large  pretrained bidirectional models for the
language modeling task.
Detailed hyper-parameter settings can be found in Appendix~\ref{app:hyperparams}.

\noindent
\textit{Results.}\enskip
In Table~\ref{tab:main} we compare models in terms of their perplexity.
In the upper part of the table, we can see that on both datasets, residual EBMs with causal attention \textsc{joint UniT} outperforms
the baseline \textsc{RALM} with approximately the same number of parameters. The non-residual baseline $\textsc{BALM}$
performs similarly to \textsc{joint UniT}, which might be due to the limitation that $P_{\phi}$ is not trained jointly with the
residual model in \textsc{joint UniT} (and $\textsc{RALM}$). However, by using our EBM approach,
we can remove the causal attention mask and use bidirectional models, which achieves better performance than both baselines and \textsc{joint UniT}:
without external data, \textsc{joint BiT-Base} reaches a higher performance than \textsc{joint UniT} with fewer parameters.

In the middle part of the table, we show that if we make the big language model baseline \textsc{BALM} 
deeper (\textsc{BALM-24L}) (24 layers instead of 12, for the same number of parameters) we attain lower perplexity.
However, training the joint model \textsc{Joint BiT-Base} on the residual of a deeper language model \textsc{Base LM-24L}
yields even lower perplexity, despite having fewer parameters. By using the same number of parameters as \textsc{BALM-24L}, \textsc{Joint Bit-Med} further decreases perplexity. 

Finally, in the lower part of the table, we show that our approach can leverage pretrained bidirectional Transformers to improve language modeling: by initializing from RoBERTa-Base, \textsc{joint BiT-Base*} obtains better perplexities than not using external data, both when the base language model has 12 layers, and when it has 24 layers.
By initializing from the state-of-the-art pretrained bidirectional Transformer
RoBERTa-Large, \textsc{Joint BiT-Large*} with a 24-layer base language model performs the best on both datasets.
\begin{table}[h]
    \centering
    \footnotesize
    \begin{tabular}{@{}lcccc@{}}
    \toprule
    \multirow{2}{*}{Model (\#parameters)}   & \multicolumn{2}{c}{CC-News}   & \multicolumn{2}{c}{Toronto Book Corpus}\\
            & Val   & Test                  & Val   & Test \\
            \midrule
    \textbf{Without External Data, 12 Layers}\\
    \ \ \textsc{base LM} (203M)     & 18.41 & 17.57                 & 16.16 & 18.29 \\
    \ \ \textsc{RALM} (LM+203M) & 17.01 & 16.17  & 15.71 & 17.85 \\
    \ \ \textsc{BALM} (408M) & $16.50$ & $15.74$ & $\mathbf{15.00}$ & $\mathbf{16.99}$  \\
    \ \ \textsc{joint UniT} (LM+203M) & 16.42-16.44 & 15.57-15.58 & 15.12-15.13 & 16.98-17.00\\
    \ \ \textsc{joint BiT-Base} (LM+125M) & $\mathbf{15.32}$-$\mathbf{15.35}$ & $\mathbf{14.61}$-$\mathbf{14.64}$ & -  & -  \\
    \midrule
    \textbf{Without External Data, 24 Layers}\\
    \ \ \textsc{Base LM-24L} (203M) & $15.71$ & $14.89$    &  $15.61$& $18.14$\\
    \ \ \textsc{RALM} (LM-24L+203M) & $15.70$ & $14.89$  & $15.63$ & $18.17$ \\
    \ \ \textsc{BALM-24L} (408M) & $14.58$ & $13.92$    & $15.20$ & $18.24$\\
    \ \ \textsc{joint UniT} (LM-24L+203M) & $14.59$-$14.61$ & $13.81$-$13.82$ & $\mathbf{15.12}-\mathbf{15.16}$ & $\mathbf{17.46}$-$\mathbf{17.48}$\\
    \ \ \textsc{joint BiT-Base} (LM-24L+125M) & ${13.68}$-${13.69}$ &   ${13.01}$-${13.03}$  & - & -\\
    \ \ \textsc{joint BiT-Med} (LM-24L+203M) & $\mathbf{12.97}$-$\mathbf{13.01}$  &  $\mathbf{12.38}$-$\mathbf{12.42}$   & - & -\\
    \midrule
    \textbf{With External Data}\\
    \ \ \textsc{joint BiT-Base*} (LM+125M) & $15.11$-$15.17$ & $14.37$-$14.42$    & $14.14$-$14.16$ & $15.72$-$15.74$\\
    \ \ \textsc{joint BiT-Large*} (LM+355M) & ${14.59}$-${14.61}$ & ${13.97}$-${14.00}$ & ${13.80}$-${13.83}$  & ${15.33}$-${15.36}$ \\
    \ \ \textsc{joint BiT-Base*} (LM-24L+125M) & $13.60$-$13.62$ &   $12.93$-$12.95$  & $14.11$-$14.12$ & $16.17$-$16.18$\\
    \ \ \textsc{joint BiT-Large*} (LM-24L+355M) & $\mathbf{12.71}$-$\mathbf{12.77}$  & $\mathbf{12.10}$-$\mathbf{12.16}$    & $\mathbf{13.30}$-$\mathbf{13.34}$ & $\mathbf{15.17}$-$\mathbf{15.22}$\\
    \bottomrule
    \end{tabular}
    \caption{Validation and test perplexity on CC-News and Toronto Book Corpus. * denotes models initialized with RoBERTa
trained on additional data. The joint model perplexity ranges are \textbf{estimated} using 100,000 samples, see
Eq.~\ref{eq:bounds}. The number of  parameters of each model is shown in parentheses.
}
    \label{tab:main}
\end{table}

One caveat of our evaluation protocol is that the perplexity bounds are only estimates, 
which might not reflect the true value, particularly since the number of possible sequences grows exponentially 
with the number of words that are generated. 
We therefore break down perplexity per position in the generated sequences as in Eq.~\ref{eq:stepppl}, 
and compare the estimated PPLs to the true enumerated 
PPLs at the last position, as shown in Figure~\ref{fig:main}. We find that at the final generation step, the estimated 
bounds agree remarkably well with the exact values, proving that our method at least gets a reasonable PPL estimate at the last generation step, and that $\textsc{Joint BiT-Med}$ and $\textsc{Joint BiT-Large*}$ outperforms baselines at the last generation step for sure. 

\begin{figure}[t]
  \centering
  \includegraphics[trim={3cm 0 0 0.7cm},clip,width=1.0\linewidth]{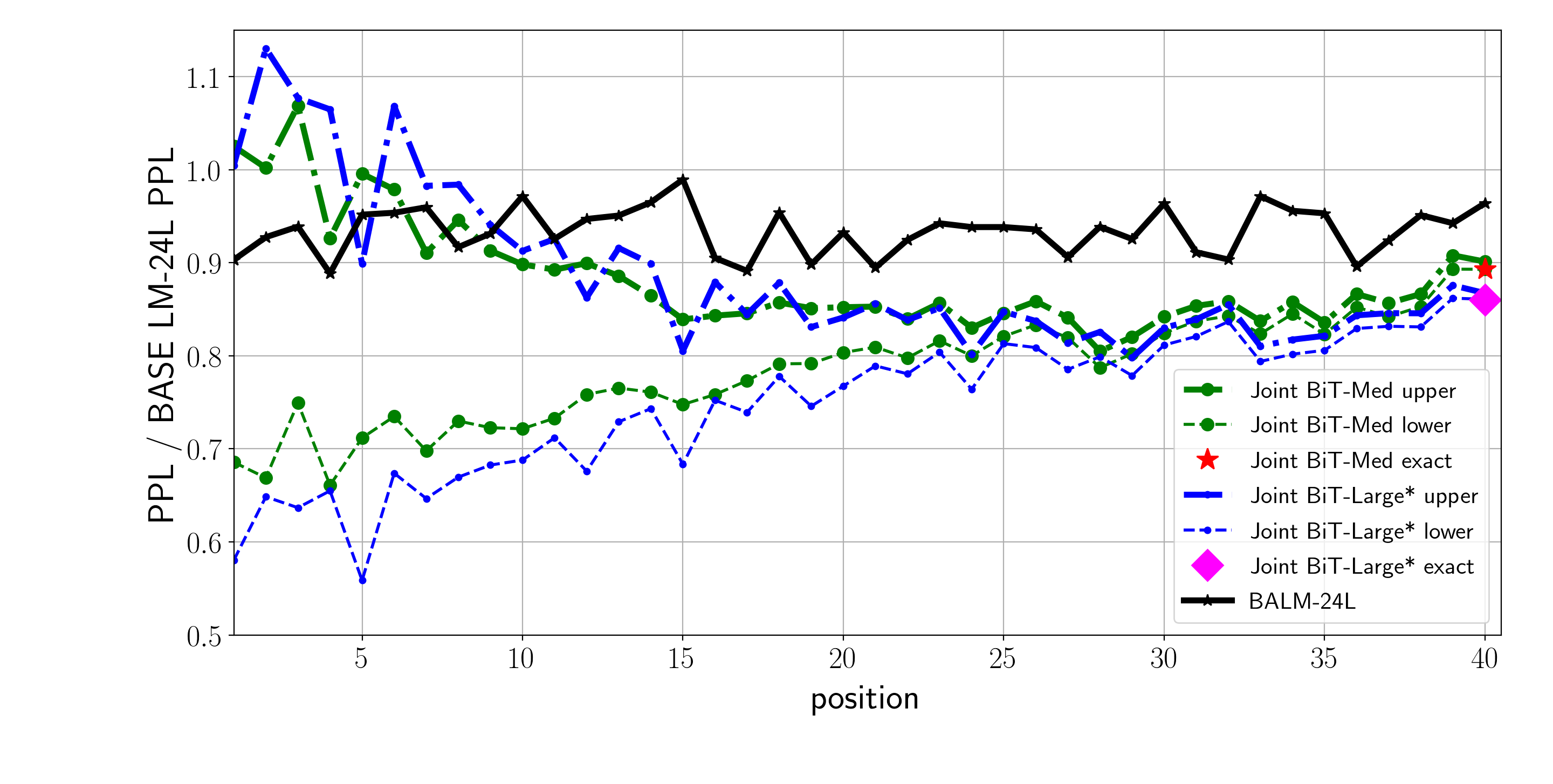}
\caption{\label{fig:main}Perplexity gain of $\textsc{Joint BiT-Med}$
  and $\textsc{Joint BiT-Large}*$ (using $\textsc{Base LM-24L}$) at
  each position relative to $\textsc{Base LM-24L}$
on the test set of CC-News. At each position the lower and upper
bounds (Eq.~\ref{eq:stepppl} estimated using the method in
Eq.~\ref{eq:bounds}, see Section~\ref{sec:eval_ppl} for more details) are estimated
using 20,000 samples. The shorter the horizon (moving to the right), the tighter the estimation is but also the more
limited the gains compared to base LM as un-normalized models are most useful on longer generations.
}
\end{figure}

\subsection{Evaluating Model Generations}\label{sec:eval_generations}
\begin{table}[!t]
    \centering
    \begin{tabular}{lclcr}
    \toprule
    Model1 (baseline) &    & Model2 (compared model) & Rate & p-value\\
    \midrule
    \textsc{base LM}& \multirow{9}{*}{$<$}  & \textsc{joint uniT} & 52.85\% &0.16 \\
    \textsc{base LM}&   & \textsc{joint BiT-Base} & 56.25\% &0.015 \\
    \textsc{base LM}&   & \textsc{joint BiT-Large*} & 58.93\% &0.00084 \\
    \textsc{base LM}&   & \textsc{BALM} & 46.77\% &0.88 \\
    \textsc{BALM}&   & \textsc{joint UniT} & 50.00\%& 0.52\\
    \textsc{BALM}&   & \textsc{joint BiT-Base} & 57.89\%& 0.0027\\
    \textsc{BALM}&   & \textsc{joint BiT-Large*} & 59.89\%& 0.00020\\
    \textsc{BALM-24L}&   & \textsc{joint BiT-Med} (24L) & 56.23\% & 0.015 \\
    \textsc{joint BiT-Large*} (24L)&   & \textsc{Human} &55.21\% & 0.036\\
    \midrule
    \textsc{base LM}& $\le$  & \textsc{BALM} & 54.85\% & 0.050\\
    \bottomrule\\
    \end{tabular}
    \caption{Human evaluation results on a subset of 333 sentences on the CC-News test set. The rate is computed as the percentage of sentences where the number of turkers preferring Model1 is strictly less than (denoted with $<$) or not greater than (denoted with $\le$) those preferring Model2. Attention check is used to drop some votes, so there might exist ties. p-value is based on single-sided binomial test. Note that for all models we used top-k sampling with $k=10$.}
    \label{tab:human}
    \vspace*{-0.2cm}
\end{table}

Better perplexity results do not necessarily imply better generations \citep{zhang2020trading,hashimoto2019unifying}: for example, approaches exhibiting mode-dropping problems such as language GANs might score terribly in terms of perplexity but do well in terms of generation \citep{yu2017seqgan,scialom2020coldgans}.
Besides, since generation from the residual EBM
requires approximations, the limited sample size might induce approximation errors
compared to truly sampling from the joint distribution. Therefore, we conducted human evaluations to compare generations
from the residual EBM model to generations from the baseline language models.

We generate from the joint model using Algorithm~\ref{algo:main} with $k=10$ and drawing
 10,000 samples from $\textsc{Base LM}$.  In open-domain generation, top-k/nucleaus sampling is necessary to get reasonable generations, as beam search or pure sampling result in degenerate sentences \citep{holtzman2019curious}. Therefore, both our baselines and our joint model's base language model used top-k sampling with $k=10$.
For each prefix, we present one completion from each model, and ask humans to select the one that is a better continuation. 
More details about human evaluation can be found in the Appendix~\ref{app:human}. 
The preference rates reported in Table~\ref{tab:human} confirm that indeed the generation quality of
 $\textsc{Joint Bit-Base}$ and $\textsc{Joint Bit-Large}*$ is better than both language model baselines. 
Depending on the model variant, our joint model (with bidirectional EBM) is preferred between 56\% and almost 60\% of the times;
interestingly, the preference rate does not change much as we compare against base LM as opposed to BALM.
In fact, humans do not seem to have a strong preference for BALM over base LM, despite the former scores two perplexity points lower.
Similarly, $\textsc{Joint Unit}$ is not strongly preferred over $\textsc{Base LM}$ despite its lower perplexity score.
We surmise that unidirectional scoring functions and auto-regressive models exhibit generation artifacts which are easily 
detected by humans, and these may overshadow the improvements brought by perplexity gains.  

\subsection{Automatic Assessment of Model Generations}
In the previous section we have demonstrated that the joint model produces better samples than the baseline auto-regressive language model according to humans. Can these generations better fool the discriminator trained to detect real from machine-generated text which we discussed in Section~\ref{sec:eval_general}? 

To answer this question and to provide an automatic way to assess generation quality, we have tested the false positive rate, that is the fraction of machine-generated samples that are deemed human-generated text, using as discriminator the BiT model. This is the classifier used in the last row of Table~\ref{tab:cross-corpus}, the one with best generalization accuracy since it was trained on all the corpora.
We found that the baseline language model $P_{\phi}$ (Base LM) has a false positive rate of 17.8\% while the joint language model $P_{\theta}$ (Joint BiT-med) has a much higher false positive rate of 31.8\%. 
The corresponding accuracy values are 89.9\% and 82.9\%.
Note that as we used only prefix of length 120 here, the numbers are not directly comparable with Table~\ref{tab:cross-corpus}.

In conclusion, samples from the joint language model are indeed harder to discriminate. This is expected since the residual EBM was precisely trained to detect machine-generated text and the resampling procedure used at generation time down-weights examples exhibiting machine generation artifacts and up-weights examples that are deemed most similar to genuine human generations (judged by the energy model). This experiment therefore demonstrates desirable generalization of the joint model: samples produced by the joint model are not just most similar to humans according to its own EBM classifier but also according to a discriminator trained with samples produced by an entirely different set of generators.  
 
\section{Analyses} \label{sec:analysis}
In this section, we analyze some of the results we obtained, both for the task of discriminating real from machine-generated text and for the text generation task.

\subsection{Effect of Prefix Length on Discrimination Accuracy}
First, we investigate the dependency between performance of the discriminators and length of the prefix.
We trained BiLSTMSmall and UniT models on examples with varying prefix length from the Wikitext corpus,
and computed the accuracy for each prefix length independently.
Figure~\ref{fig:accuracy_length} shows that as the prefix length increases (and the generation gets shorter),
the discrimination task gets harder and the difference between the models more prominent.
The unconditional case, i.e. zero prefix length, is the easiest, while prefixes of length 120 and 140 that are the main
experimental setup in this work, are the hardest.


\begin{figure}
\includegraphics[width=0.5\linewidth]{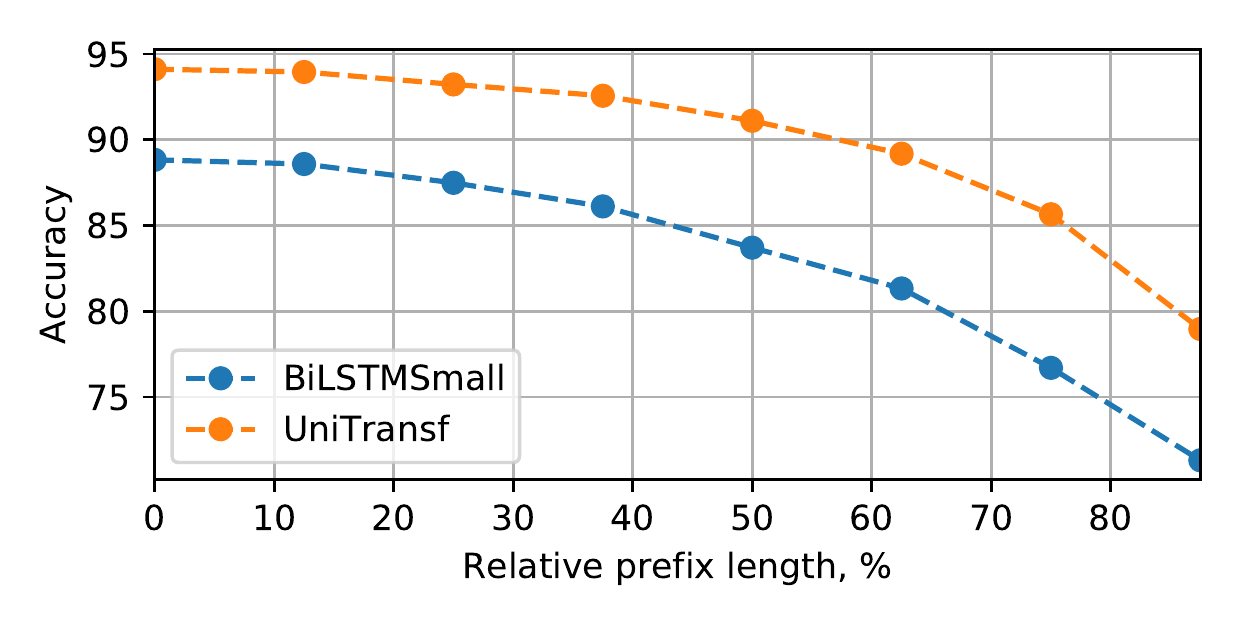}

\caption{\small Discrimination accuracy as a function of the ratio between the prefix length and the total length of the sequence
on the Wikitext dataset.}
\label{fig:accuracy_length}
\end{figure}

\subsection{Stability to Other Negative Distributions} \label{sec:stability}
\begin{figure}[!t]
\centering
\includegraphics[width=\linewidth]{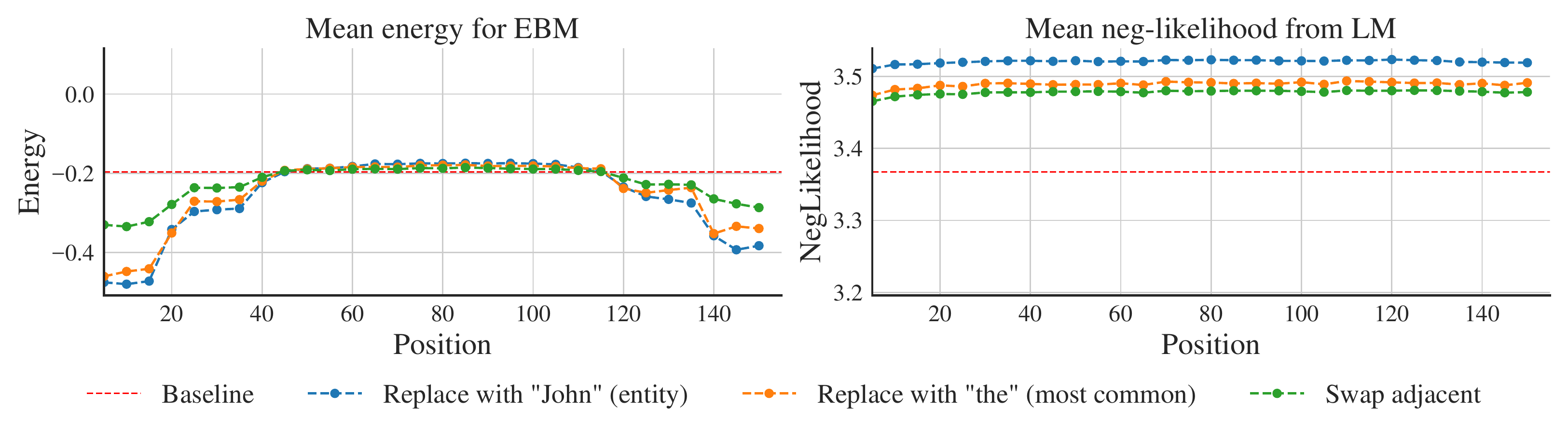}
\caption{\small
Effect of applying various perturbations (word replacement and swap of adjacent words) to ground-truth sequences
at different positions in terms of energy function and generator negative log-likelihood
(averaged over the whole test set of Wikitext).
The energy is only affected by corruptions at either end of the sequence.
These out-of-domain corruptions invariably decrease the energy.
  However, all perturbations increase the negative log-likelihood of the sequence.}
\label{fig:perturbation}
\end{figure}
In Section~\ref{sec:cross-corpus} we have seen that the energy function is less robust to negatives generated
from a model trained on a different corpus.
However, even in that case, a negative is still a sample from an auto-regressive neural network.
In Appendix~\ref{sec:exploring}, we show examples where changing a few entities can cause large
jumps in the energy (from negative to positive or vice versa), and so fool the EBM.
More generally, we see that the energy function is not robust to truly out-of-domain samples.
For example, the energy will score blocks of randomly generated text lower than real text.

These behaviors are evidence that the energy functions have learned the regularities of {\em generated} text,
as opposed to learning the regularities of real text. We surmise that it does so because modeling the latter would be much
more difficult than the former. By modeling generated text,
the energy function assigns low score to anything that is not generated by its training generator.

 While not surprising, this might be considered a liability of such energy functions.
However, as a model of text, the energy functions should be considered as working on the {\em residuals} of the
language models used to generate negatives.
   For the examples in Appendix~\ref{sec:exploring}, the language model records a large {\em decrease}
in likelihood after the change in entity; and language models of course give much lower likelihood to random text
than gold or generated text. Therefore, the energy function needs not to be accurate on examples that are already very unlikely according to these language models. These considerations further motivate our view of the EBM as a residual model and for optimizing for the joint distribution as specified in Eq.~\ref{eq:gen}.

In Figure~\ref{fig:perturbation} we show the average effects of applying various perturbations to sequences from
Wikitext103  on an in-domain energy and language model at each location (from 1 to 160) in the sequence.
We see that for all perturbations, the energy decreases its value, but the language model increases its negative log likelihood.
 We also see that the energy function is more sensitive to the ends of the text, which is where the negatives were different
from real text at training time.

\begin{figure}[t]
\begin{subfigure}{.5\textwidth}
  \centering
  \includegraphics[width=0.9\linewidth]{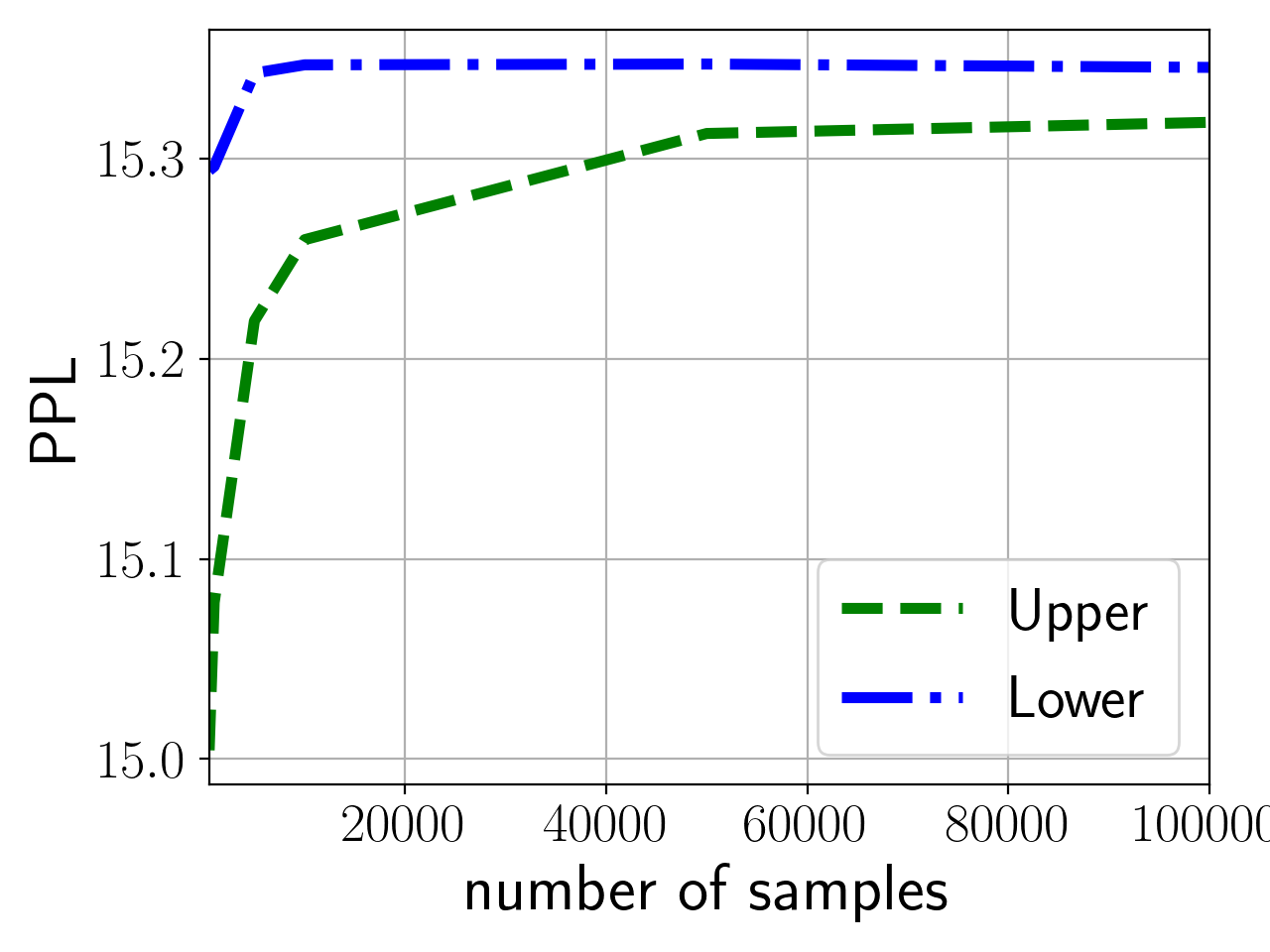}

\end{subfigure}%
\begin{subfigure}{.5\textwidth}
\includegraphics[width=0.9\linewidth]{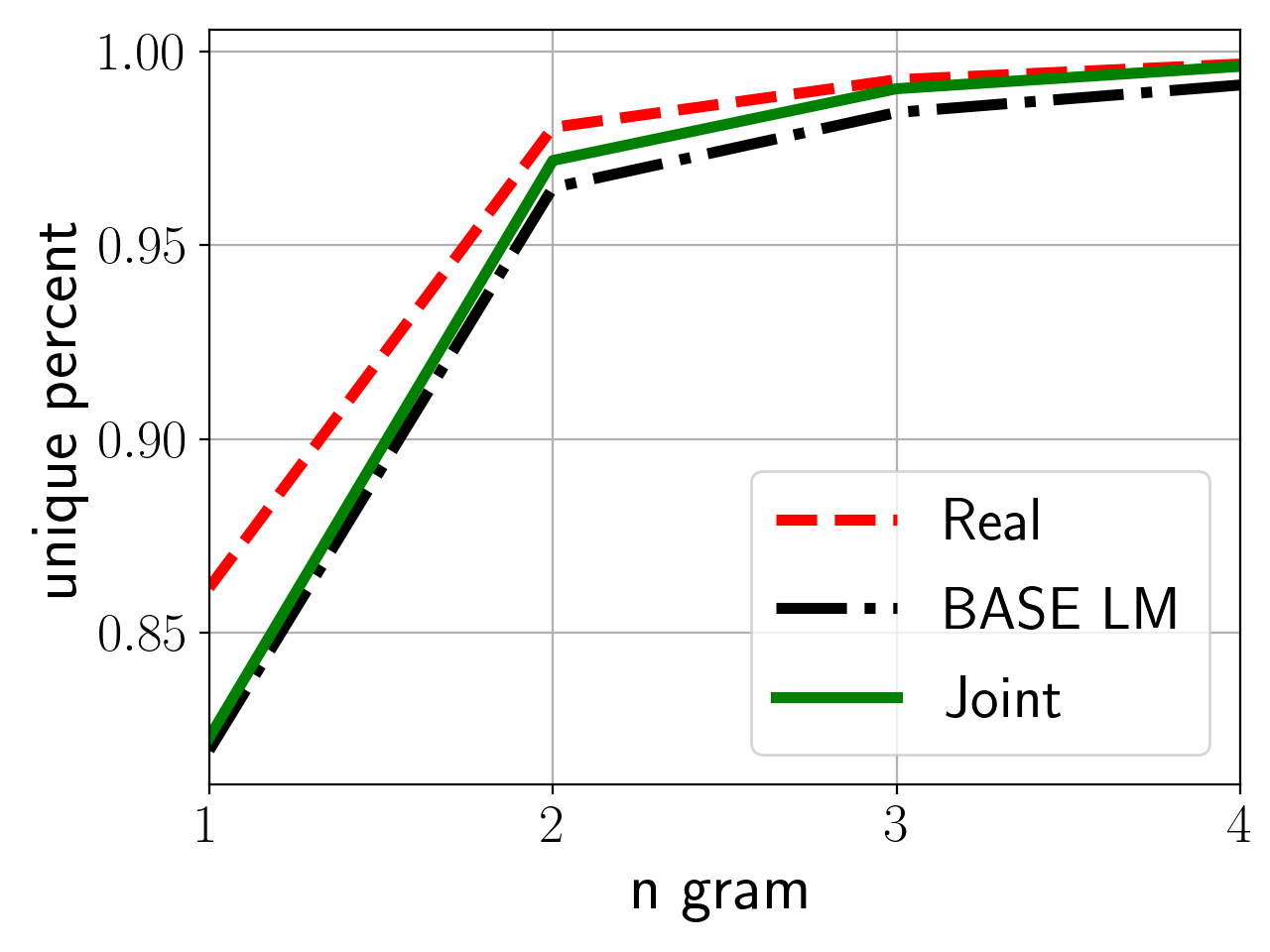}
\end{subfigure}
\caption{Left: PPL estimation for joint \textsc{BiT-Base} on CC-News validation set as we vary
the number of samples. Right: Percentage of Unique n-grams found in real data,
samples from the joint model $\textsc{BiT-Base}$ and samples from the base language model.
The joint sampling is done with 10,000 samples.}
\label{fig:repet}
\end{figure}

\subsection{Effect of Number of Samples in PPL estimates}
We now turn our attention to the use of the residual EBM for language modeling.
In Figure~\ref{fig:repet}, we vary the number of samples we take in order to
estimate PPL upper and lower bounds, see Section~\ref{sec:eval_ppl}.
Beyond 20,000 samples the upper estimate becomes very stable,
although we have to emphasize that these estimates might be biased even though the gap between lower and upper bound
closes as we take more samples.

\subsection{Analyzing Repetitions in Generations} \label{sec:repetitions}
A typical artifact of auto-regressive language models is their tendency to repeat phrases~\citep{holtzman2019curious, unlikely}.
It is then interesting to check whether the joint model is able to alleviate this artifact. Fig.~\ref{fig:repet}
shows that indeed the joint model has a slightly higher percentage of unique n-grams (unique per sample) compared to the baseline language model with $n=2,3,4$,
although still not as high as the original human-generated text. Appendix~\ref{appendix:examples} shows samples that got the highest energy score (hence very unlikely to sample during the resampling phase in Algorithm~\ref{algo:main}), most of which contain repetitions, which is a strong indicator of machine-generated text. This observation partly explains why we got slightly fewer repetitions in the joint model compared to the base language model. 

\subsection{A Necessary Condition for Matching the Data Distribution}
If the joint model $p_{\theta}$ matches the data distribution $p_d$, then statistics computed on a large population of samples
from the two distributions should also match. In particular, Fig.~\ref{fig:histog} show the density plots of log-likelihood scores
of the baseline language model (left) and joint model (right) when fed with their own samples versus samples from the test set.
We observe that the histogram of samples from the joint model matches the real data distribution more closely:
The difference of means in the $\textsc{LM Base}$ case is 21.64 whereas the difference is 6.20 in the joint approach.

\begin{figure}[t]
\begin{subfigure}{.5\textwidth}
  \centering
  \includegraphics[width=0.9\linewidth]{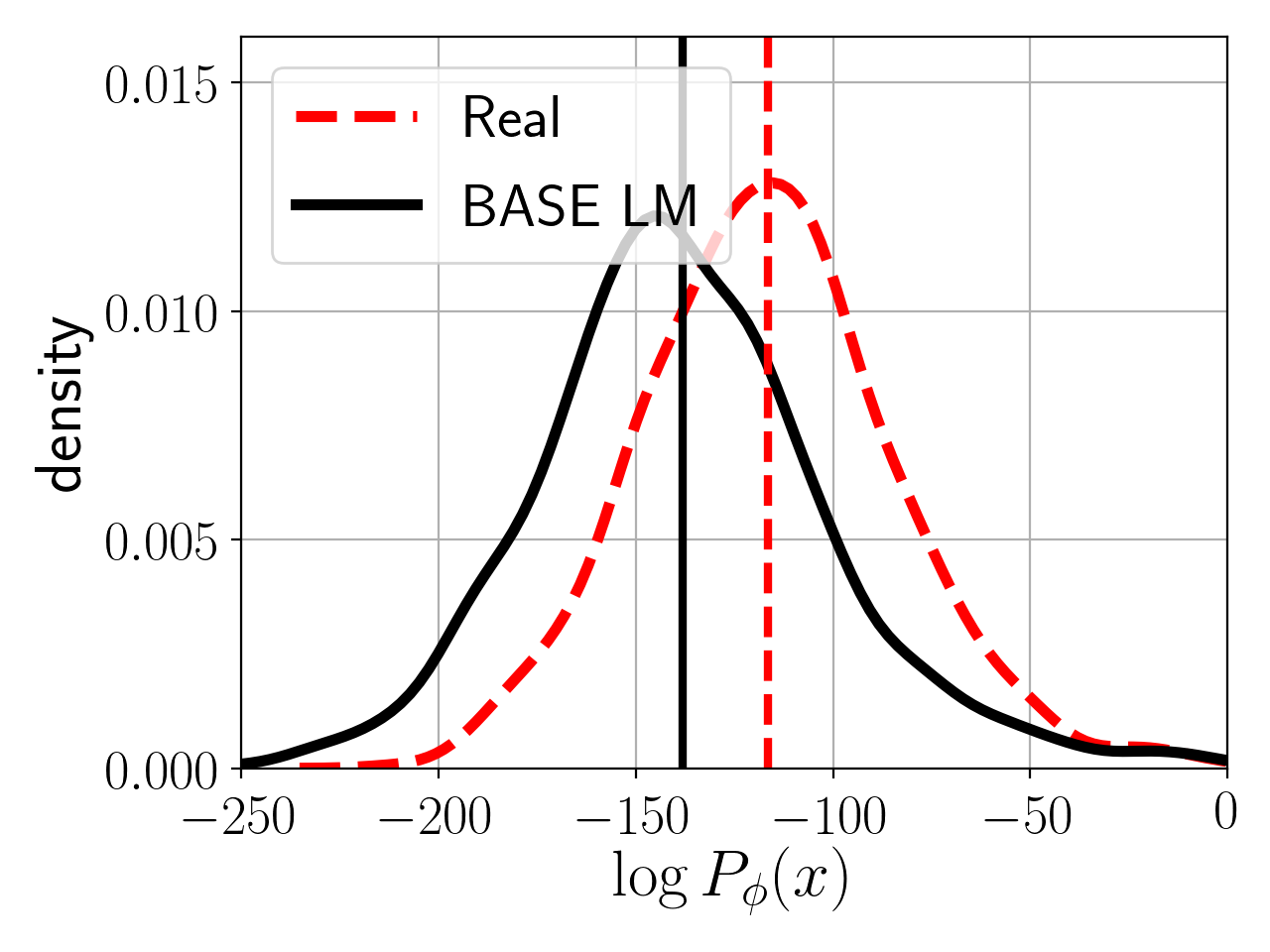}

\end{subfigure}%
\begin{subfigure}{.5\textwidth}
\includegraphics[width=0.9\linewidth]{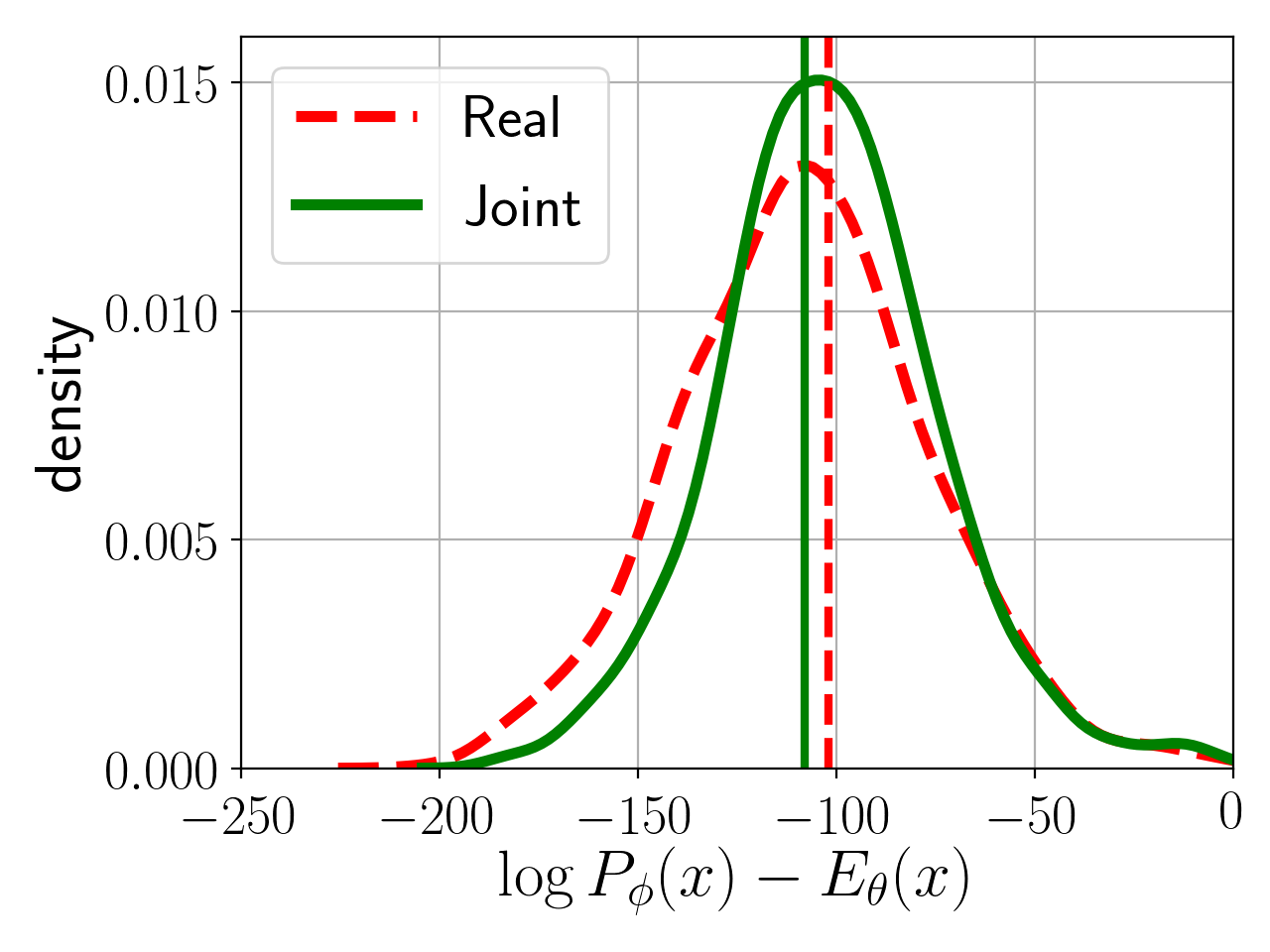}
\end{subfigure}
\caption{\label{fig:histog}Density plot of log-probability scores using the base
language model (left) or the joint model  (right). The red curve corresponds to real samples, the black curve to samples from \textsc{Base LM} and the green curve to samples from \textsc{BiT-Base}. The joint model provides a much better fit than the base language model.}
\end{figure}

\section{Limitations of Residual EBMs} \label{sec:limitations}
In the previous sections we highlighted the strengths of residual EBMs, namely their simplicity, efficiency both at training
and test time, their ability to generalize and their improved perplexity scores against strong auto-regressive language model baselines.
In this section, we comment on their limitations to caution the reader about when these methods are more likely to succeed
and to inform other  researchers about  what future avenues of research may naturally arise from this work.

In order to make training efficient and side-step computationally costly negative mining using the energy function itself, the current approach
uses negatives generated from a pretrained auto-regressive language model. Therefore, our model works as long as the base
language model from which we draw samples is strong enough, and as long as the ground truth and other plausible sequences are
{\em reachable} by the base language model.

If the base language model has poor quality, then generation from our
joint model is going to be poor as well, as the joint model merely resamples generations from the original language model. For example, in the extremity, if the base distribution is uniform, we might need to sample an exponential number of sentences to expect a sentence of good quality.
Moreover, training is going to be trivial if the base language model is poor, because the residual energy function
merely needs to detect trivial generation artifacts from the base language model. In fact, observe that the role of
positive and negative samples is symmetric in the loss of Eq.~\ref{eq:nce}. This means that the energy function can choose
to minimize the loss by either modeling the true data or the negative samples; since the latter have much simpler structure,
it is going to model the negative samples. Therefore, importance sampling amounts to mostly down-weighting the worst samples
from the base language model, as already discussed in Section~\ref{sec:repetitions}.
The consequence of this is that search with a poor base language model is going to be catastrophically inefficient, as
we would need to sample an impractically large number of negatives in order to find samples that are reasonably close
to the true data manifold.

To summarize, this work makes a rather strong implicit assumption on the quality of the base language model, and it
is expected to work well only when this is rather strong. In our application, this assumption is met quite well in practice
as large auto-regressive language models trained on large datasets have improved significantly in recent years~\citep{radford2019language}.
In general however, residual learning always carries liability to its base model.


\section{Conclusions}
The EBM framework could potentially unlock more expressive models of text, as they are not limited to scoring a
single word at a time as current locally normalized auto-regressive models do.
Unfortunately, training EBMs is challenging because generating negatives using the energy function itself is still an open research problem,
and does not scale well in practice. In this work, we leverage generations produced by
pretrained language models as negative samples~\citep{wang2018learning,parshakova2019global}.

As a preliminary yet necessary step in this direction we have investigated the generalization ability of such EBMs.
We found that EBMs, when trained on large datasets, achieve  good generalization. For instance, they
behave nicely when tested with negatives produced by generators that have rather different architectures.
The generalization is less good when generators are trained on other corpora, but EBMs re-gain robustness once we train
them on even bigger composite datasets.

Finally, we showed that such EBMs can be used to improve text generation. Generation is efficient as it amounts to resampling
from the large set of negatives produced by the base language model. Our estimates show that the resulting model has lower perplexity than the base
language model, and our human evaluation confirms that generations from the joint model are preferred to generations from the base language model.
Finally, this approach may be interpreted
as a natural way to finetune a large pretrained bidirectional Transformer like BERT for text generation applications.

\section{Future Work}
The goal of this work is to improve upon strong language
        models on large scale datasets. The field has also been
        focusing on this setting, as shown by recent
        works~\citep{radford2019language,brown2020language}. However,
        the use of weaker base language models brings interesting
        theoretical questions. For example, in the limit case when the
        baseline distribution is uniform, we might need to sample an
        exponential number of sentences to expect a sentence of good
        quality. We believe this is an interesting direction of future
        research.

        In the future, we can also improve EBMs for text by simply making their architectures bigger and
increasing the diversity and size of their training datasets. Of course, further scaling up of EBMs will pose
formidable engineering challenges.

On the application side, a natural application of the current formulation of EBMs is real/fake text discrimination.
We believe that  this is important in its own right, and that EBMs can be powerful, as demonstrated
by their superior performance compared to discriminating using the original language model log-likelihood.
We also plan to investigate other ways to generate negatives that may strike a better trade-off
between the amount of compute each negative requires and their closeness to the joint model distribution.
It would also be interesting to explore other loss functions and the generation of longer pieces of text by using this
model auto-regressively at the chunk level, as opposed to the token level.

In this work we only studied the task of language modeling. Language
models are the fundamental building block of modern neural text
generation models. Our promising results on language modeling point to
future directions in applying the residual EBM approach to other text
generation tasks, such as speech recognition and machine
translation~\citep{naskar2020energy}. With the additional source side
information constraint, it might be harder for the model to satisfy
the self-normalizing property under various inputs \citep{ma18,
  roark2007discriminative}. At the same time, this constraint might
make the partition function estimation easier. These questions need to be answered empirically, and we leave them for future work.



\newpage

\appendix

\section{Hyper-parameter Setting} \label{app:hyperparams}
All models are implemented using the PyTorch framework~\citep{paszke2017automatic} and are optimized using Adam~\citep{kingma2014adam}.
To train our biggest models (UniT and BiT) we used several machines each with 8 GPUs in synchronous mode using data parallelism.
The resulting large batch size speeds up training when combined with float16 reduced precision and cosine scheduling of the learning rate
without any restarts~\citep{loshchilov2016sgdr}, i.e. we decay the learning rate to zero over the course of ``max steps'' updates and then stop training.
Using these methods, we reduced training time by five times compared to a single node training.
For simpler models we used a single node with up to 8 GPUs and inverse square root decay.

\begin{table}[!h]
\center
    \small
\begin{tabular}{llcccccc}
\toprule
Model & max lr & bsz (/GPU) & GPUs & fp16 & warmup steps & max steps & max grad norm \\
\midrule
Linear              & 0.01  & 1024 & 1   & + & 1000 & -           &          0.25 \\
BiLSTM              & 0.0002 & 128 & 8   & + & 1000 & -           &          0.25 \\
UniT           & 0.0003 & 32  & 64  & + & 2000 & 180000      &          0.25 \\
BiT            & 0.00005& 20  & 192 & + & 2000 & 180000      &          10.0 \\
\bottomrule
\end{tabular}
\caption{\small Hyper-parameter values used in our real/fake discrimination experiments.}
\label{tbl:hyperparams}
\end{table}

\begin{table}[!hp]
\centering
\scriptsize
\begin{tabular}{@{}llcccccc@{}}
\toprule
Model                                           & max lr  & bsz (/GPU) & GPUs & fp16 & warmup steps & max steps & max grad norm \\ 
\midrule
\textsc{base LM}                   & 0.0001  & 32            & 64   & -    & 2,000        & 180,000   & 10            \\
\textsc{RALM}                      & 0.0001  & 64            & 64   & -    & 2,000        & 180,000   & 10            \\
\textsc{BALM}                      & 0.0001  & 32            & 64   & -    & 2,000        & 180,000   & 10            \\
\textsc{joint UniT}                & 0.0003  & 64            & 64   & +    & 2,000        & 180,000   & 10            \\
\textsc{joint BiT-Base}            & 0.00005 & 60            & 64   & -    & 2,000        & 90,000    & 0.25          \\
\textsc{joint BiT-Base*}           & 0.00005 & 60            & 64   & -    & 2,000        & 90,000    & 0.25          \\
\textsc{joint BiT-Large*}          & 0.0003  & 64            & 64   & +    & 2,000        & 90,000    & 10            \\ 
\midrule
\textsc{base LM-24L}               & 0.0003  & 50            & 64   & -    & 2,000        & 90,000    & 0.25          \\
\textsc{RALM-24L}                  & 0.00015 & 28            & 64   & -    & 1,000        & 90,000    & 0.25          \\
\textsc{BALM-24L}                  & 0.0003  & 28            & 64   & -    & 2,000        & 90,000    & 0.25          \\
\textsc{joint UniT} (24L)       & 0.0003  & 64            & 64   & +    & 2,000        & 180,000   & 10            \\
\textsc{joint BiT-Base} (24L)   & 0.00005 & 60            & 64   & -    & 2,000        & 90,000    & 0.25          \\
\textsc{joint BiT-Base*} (24L)  & 0.00005 & 60            & 64   & -    & 2,000        & 90,000    & 0.25          \\
\textsc{joint BiT-Med} (24L)    & 0.00005 & 32            & 64   & -    & 2,000        & 90,000    & 0.25          \\
\textsc{joint BiT-Large*} (24L) & 0.00005 & 20            & 64   & -    & 2,000        & 90,000    & 0.25      \\
\bottomrule
\end{tabular}
\caption{\small Hyper-parameter values used in our language modeling and text generation experiments.}
\end{table}

\newpage

\section{\label{sec:exploring}Perturbing the Energy Function}
In this section we show that we can change a few words to make a negative example become a ``positive'' one
as judged by the energy function (alone), and vice versa, by using gradient information.

Below here, we show an example of a ground truth sentence from the Wikitext dataset.

\begin{framed}
 <EOS> =Robert Boulter= <EOS>  <EOS> Robert Boulter is an English film, television and theatre actor. He had a guest-starring role on the television series The Bill in 2000. This was followed by a starring role in the play Herons written by Simon Stephens, which was performed in 2001 at the Royal Court Theatre. He had a guest role in the television series Judge John Deed in 2002. In 2004 Boulter landed a role as "Craig" in the episode "Teddy's Story" of the television series The Long Firm; he starred alongside actors Mark Strong and\generategap{ Derek Jacobi. He was cast in the 2005 theatre productions of the Philip Ridley play Mercury Fur, which was performed at the Drum Theatre in Plymouth and the Menier Chocolate Factory in London. He was}
\end{framed}

Here the block has 160 BPE tokens, where the first 120 tokens (black font) are used as context and the remaining
40 are the ground truth completion.
Next, we use a language model to generate 10 negatives:

\begin{framed}
\textit{Negative 1.}\enskip <EOS> =Robert Boulter= <EOS>  <EOS> Robert Boulter is an English film, television and theatre actor. He had a guest-starring role on the television series The Bill in 2000. This was followed by a starring role in the play Herons written by Simon Stephens, which was performed in 2001 at the Royal Court Theatre. He had a guest role in the television series Judge John Deed in 2002. In 2004 Boulter landed a role as "Craig" in the episode "Teddy's Story" of the television series The Long Firm; he starred alongside actors Mark Strong and\generategap{ Chris Elliott in 2006 as the character. Boulter has appeared in various television specials dealing with the series since its inception. <EOS> After graduating with a degree in drama, Boulter worked as a}

\textit{Negative 2.}\enskip <EOS> =Robert Boulter= <EOS>  <EOS> Robert Boulter is an English film, television and theatre actor. He had a guest-starring role on the television series The Bill in 2000. This was followed by a starring role in the play Herons written by Simon Stephens, which was performed in 2001 at the Royal Court Theatre. He had a guest role in the television series Judge John Deed in 2002. In 2004 Boulter landed a role as "Craig" in the episode "Teddy's Story" of the television series The Long Firm; he starred alongside actors Mark Strong and\generategap{ Stephen Fry in the episode "You're All Alone" and in the episode "The Longest Day". <EOS> He auditioned for the role in the series in 2003 but was not cast. In 2005}

$\vdots$

\textit{Negative 10.}\enskip <EOS> =Robert Boulter= <EOS>  <EOS> Robert Boulter is an English film, television and theatre actor. He had a guest-starring role on the television series The Bill in 2000. This was followed by a starring role in the play Herons written by Simon Stephens, which was performed in 2001 at the Royal Court Theatre. He had a guest role in the television series Judge John Deed in 2002. In 2004 Boulter landed a role as "Craig" in the episode "Teddy's Story" of the television series The Long Firm; he starred alongside actors Mark Strong and\generategap{ Ian Somerhalder on the BBC series Top Gear; this was followed up in 2007 by a role in the BBC science-fiction series Doctor Who. In 2008 Boulter appeared in the BBC}
\end{framed}


In this example, using the big Transformer model, UniT, as the energy function,
we are able to separate real from fake examples
as shown. We want to perturb these negatives to violate the margin.
To do so, we make use of the gradient information from the energy function $\nabla_{x} E_\theta (x)$
and use a first order Taylor expansion to
approximate the effect of a token replacement (we abuse our notations and use $x$ to denote embeddings in this analysis).
Given the original sample $x$, we change one word $x_i$ to $x_i'$ to arrive at $x'$. The score of $x'$ is approximately:

\begin{equation}
E_\theta(x) + \nabla_{x_i} E_\theta(x) \cdot (x_i'-x_i)
\end{equation}

Using this approximation, we can search for those token replacements that increase/decrease the energy the most.
We can easily change a negative sample to a positive one by replacing the 5 words highlighted below.
 In paratheses, we report both score and language model perplexity.

\begin{framed}
\textit{Original negative (score -0.77, PPL 20.77).}\enskip  <EOS> =Robert Boulter= <EOS>  <EOS> Robert Boulter is an English film, television and theatre actor. He had a guest-starring role on the television series The Bill in 2000. This was followed by a starring role in the play Herons written by Simon Stephens, which was performed in 2001 at the Royal Court Theatre. He had a guest role in the television series Judge John Deed in 2002. In 2004 Boulter landed a role as "Craig" in the episode "Teddy's Story" of the television series The Long Firm; he starred alongside actors Mark Strong and\markgap{ Chris}\markgap{ Elliott} in 2006 as the character. Boulter has appeared in various television specials\markgap{ dealing} with the series since its inception. <EOS> After graduating with a degree in\markgap{ drama}, Boulter worked as a

\textit{Perturbed negative (score 0.00, PPL 117.30).}\enskip <EOS> =Robert Boulter= <EOS>  <EOS> Robert Boulter is an English film, television and theatre actor. He had a guest-starring role on the television series The Bill in 2000. This was followed by a starring role in the play Herons written by Simon Stephens, which was performed in 2001 at the Royal Court Theatre. He had a guest role in the television series Judge John Deed in 2002. In 2004 Boulter landed a role as "Craig" in the episode "Teddy's Story" of the television series The Long Firm; he starred alongside actors Mark Strong and\markgapsingle{ Gor}{-0.0.64}{28.97}\markgapsingle{ Trem}{-0.56}{38.86} in 2006 as the character. Boulter has appeared in various television specials\markgapsingle{ relates}{-0.77}{24.60} with the series since its inception. <EOS> After\markgapsingle{Health}{-0.35}{39.52} with a degree in\markgapsingle{edited}{-0.49}{27.45}, Boulter worked as a
\end{framed}

In the above example, we also show the (score, PPL) for replacing a single token in the subscripts. Similarly, we can replace a few words and make a positive sample become negative.

\begin{framed}
\textit{Original positive (score -0.25, PPL 77.68).}\enskip  <EOS> =Robert Boulter= <EOS>  <EOS> Robert Boulter is an English film, television and theatre actor. He had a guest-starring role on the television series The Bill in 2000. This was followed by a starring role in the play Herons written by Simon Stephens, which was performed in 2001 at the Royal Court Theatre. He had a guest role in the television series Judge John Deed in 2002. In 2004 Boulter landed a role as "Craig" in the episode "Teddy's Story" of the television series The Long Firm; he starred alongside actors Mark Strong and\markgap{ Derek} Jacobi. He was cast in the 2005 theatre productions of the Philip Ridley play Mercury Fur, which was performed at the\markgap{ Drum} Theatre in\markgap{ Plymouth} and the\markgap{ Men}ier\markgap{ Chocolate} Factory in London. He was

\textit{Perturbed positive (score -0.78, PPL 142.85).}\enskip  <EOS> =Robert Boulter= <EOS>  <EOS> Robert Boulter is an English film, television and theatre actor. He had a guest-starring role on the television series The Bill in 2000. This was followed by a starring role in the play Herons written by Simon Stephens, which was performed in 2001 at the Royal Court Theatre. He had a guest role in the television series Judge John Deed in 2002. In 2004 Boulter landed a role as "Craig" in the episode "Teddy's Story" of the television series The Long Firm; he starred alongside actors Mark Strong and\markgapsingle{connected}{-0.30}{118.30} Jacobi. He was cast in the 2005 theatre productions of the Philip Ridley play Mercury Fur, which was performed at the\markgapsingle{ C}{-0.28}{75.36} Theatre in\markgapsingle{ London}{-0.47}{62.29} and the\markgapsingle{ Vaughan}{-0.40}{93.77}ier\markgapsingle{cerning}{-0.32}{100.71} Factory in London. He was
\end{framed}

\begin{figure}[t]
\centering
\includegraphics[width=.5\linewidth]{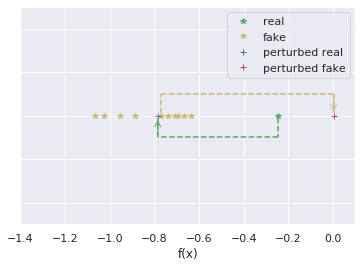}
\caption{\small By changing a few words we can make a negative sample become real as scored by the (negative)
ennergy function, and vice versa.}
\label{fig:margin_edited}
\end{figure}

As shown in Figure~\ref{fig:margin_edited}, we can easily ``fool'' the discriminator by editing a few words.
However, these edited sentences have a very low probability (high PPL) under the generator we used.
This explains why the discriminator gets fooled, because it has never seen such negatives during training.

\newpage
\section{\label{appendix:qual}Top-k auto-regressive sampling}
In this subsection, we factorize the joint model \textsc{BiT-Base} auto-regressively, and compare its differences with \textsc{Base LM}. Since even estimating the per step probabilities according to Eq.~\ref{eq:stepppl} is too computationally expensive, we further approximate it by only considering the top 128 words predicted by \textsc{Base LM}, where we sample 10,000 completions for each of them to estimate $P(x_t|x_{<t})$. Then we take the top 10 entries and re-normalize, and compare it to the top 10 probabilities of \textsc{Base LM}. Note that this procedure cannot be used as a properly normalized model to evaluate perplexity, because these top 128 words might not contain the ground truth.

Our initial explorations suggested that the joint model tends to generate fewer repetitions. Therefore we picked a few LM samples where there are repetitions at $x_t$, and use the same context $x_{<t}$ to estimate $P(x_t|x_{<t})$ for the joint model. Some examples of $P(x_t|x_{<t})$ of \textsc{Base LM} and \textsc{BiT-Base} are presented in Table~\ref{tab:qual}. Indeed \textsc{Base LM} usually assigns lower probabilities to repetitions even though the top k words remain the same, which is not surprising given that the existence of repetition is a strong indicator of coming from the LM, which would lead to a higher energy value hence lower joint probability.

\begin{table}[!htp]
  \tiny
    \centering
    \begin{tabular}{p{6cm}cccc}
    \toprule
    Context $x_{<t}$ & Model & Rank     &  $x_t$ & $P(x_t|x_{<t})$  \\
    \midrule
     \multirow{10}{6cm}{\footnote{Excerpt from \url{ https://www.swissinfo.ch/eng/multinational-principles_swiss-government-gives-green-light-for-un-migration-accord/44464186}.}... is aimed at setting common benchmarks for orderly migration practices, thereby reducing irregular flows. The Global Compact contains ten guiding principles, including that migrants cannot be settled by countries with better integration policies and a fair and sustainable development. "For the first time in our history, a {\color{red}legally binding} and } &\multirow{5}{*}{\textsc{Base LM}} & 0  & {\color{red}binding}&  {\color{red}0.39}\\
     & & 1  & {\color{red}legally}&  {\color{red}0.33}\\
     & & 2  & internationally&  0.06\\
     & & 3  & comprehensive&  0.05\\
     & & 4  & {transparent}&  0.04\\
     \cline{2-5}
     &\multirow{5}{*}{\textsc{BiT-Base}} & 0  & {\color{red}binding}&  {\color{red}0.18}\\
     & & 1  & {\color{red}legally}&  {\color{red}0.17}\\
     & & 2  & internationally&  0.12\\
     & & 3  & comprehensive&  0.09\\
     & & 4  & {transparent}&  0.08\\
     \midrule
         \multirow{10}{6cm}{
\footnote{Excerpt from \url{https://www.forbes.com/sites/markmurphy/2018/05/11/this-is-the-one-piece-of-data-that-85-of-recruiters-are-missing/\#25917c765dad}.}
... companies that land their first-choice candidates 90-100\% of the time, 24\% of them have "thoroughly defined" their high performer attitudes. By contrast, only 1\% of companies that struggle to land their first-choice candidates "thoroughly defined" their high performer attitudes. So it seems pretty clear that companies that land their top-choice candidates are not always as {\color{red}willing} and }
&\multirow{5}{*}{\textsc{Base LM}} & 0  & {able}&  {0.66}\\
     & & 1  & {\color{red}willing}&  {\color{red}0.09}\\
     & & 2  & eager&  0.07\\
     & & 3  & ready &  0.05\\
     & & 4  & {well}&  0.04\\
     \cline{2-5}
     &\multirow{5}{*}{\textsc{BiT-Base}} & 0  & {able}&  {0.75}\\
     & & 1  & {\color{red}willing}&  {\color{red}0.05   }\\
     & & 2  & eager&  0.05\\
     & & 3  & ready&  0.04\\
     & & 4  & {well}&  0.03\\
     \midrule
     \multirow{10}{6cm}{\footnote{Excerpt from \url{https://www.reuters.com/article/us-usa-fed-powell/fed-nominee-powell-once-hawkish-now-champions-yellens-focus-on-jobs-idUSKBN1DS0FG}}... it reveals a key skill needed to lead the Fed. "You need to know what you don't know. And you need to be willing to listen when you don't know something," said Karen Dynan, who as an assistant Treasury Secretary in Barack Obama's second administration would regularly meet Fed governors. <EOS> New Delhi Dec 5 The following are mergers under review by India's {\color{red}financial} services and  } &\multirow{5}{*}{\textsc{Base LM}} & 0  & {banking}&  {0.64}\\
     & & 1  & {\color{red}financial}&  {\color{red}0.10}\\
     & & 2  & insurance&  0.09\\
     & & 3  & technology&  0.05\\
     & & 4  & {IT}&  0.04\\
     \cline{2-5}
     &\multirow{5}{*}{\textsc{BiT-Base}} & 0  & {banking}&  {0.92}\\
     & & 1  & {\color{red}financial}&  {\color{red}0.06}\\
     & & 2  & insurance&  0.01\\
     & & 3  & technology&  0.00\\
     & & 4  & {IT}&  0.00\\
         \bottomrule\\
    \end{tabular}
    \caption{Comparison of $P(x_t|x_{<t})$ between \textsc{Base LM} and \textsc{BiT-Base} on a few examples. Repetitions are marked with red. Only the top 5 probabilities are shown.}
    \label{tab:qual}
\end{table}

\newpage
\section{Proof of Theorem 2} \label{appendix:proof}
\setcounter{theorem}{1}
\begin{theorem}
Denote $T_n$ as the empirical estimate of $\log \mathbb{E}_{x\sim P_{\phi}} \exp(-E(x))$ with $n$ samples $x^i\sim P_{\phi}$ $(i=1,\cdots, n)$, i.e., $T_n = \log \frac{1}{n}\sum_{i=1}^n\exp(-E(x^i)) $, then $\forall \epsilon > 0$, $\exists N>0$ such that $\forall n > N$ we have
\begin{equation}
      Z_\theta -\epsilon < \mathbb{E}[T_n] < Z_\theta  < \mathbb{E}[(2n-1) T_n - 2(n-1) T_{n-1}] < Z_\theta +\epsilon
\end{equation}
\end{theorem}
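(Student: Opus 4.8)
Write $Y^i:=\exp(-E(x^i))$, so the $Y^i$ are i.i.d.\ with mean $\mu:=\mathbb E_{x\sim P_\phi}[\exp(-E(x))]$ and (assuming $E$ is not $P_\phi$-a.s.\ constant, the only case in which the strict inequalities could fail) variance $\sigma^2\in(0,\infty)$; the quantity written $Z_\theta$ in the statement is the log-partition function $\log\mu$, and $T_n=\log S_n$ with $S_n:=\frac1n\sum_{i=1}^n Y^i$. I will use that the energy $E$ is essentially bounded for the neural networks in question, so $Y^i\in[a,b]$ with $0<a\le b<\infty$ and $S_n\in[a,b]$; this is what keeps all Taylor remainders and moments of $S_n$ under control. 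Since $T_{n-1}$, whether formed from fresh samples or by the leave-one-out averaging described in the text, has expectation $\mathbb E[\log S_{n-1}]$ by exchangeability, only $\mathbb E[T_n]$ and $\mathbb E[T_{n-1}]$ enter. \textit{Step 1 (the two inner inequalities).} As $\log$ is strictly concave and $S_n$ non-degenerate, Jensen gives $\mathbb E[T_n]=\mathbb E[\log S_n]<\log\mathbb E[S_n]=\log\mu$, i.e.\ $\mathbb E[T_n]<Z_\theta$ for every $n$; and $Z_\theta-\epsilon<\mathbb E[T_n]$ for large $n$ follows from $\mathbb E[T_n]\to\log\mu$, which holds by the strong law ($S_n\to\mu$ a.s.) and bounded convergence since $\log S_n\in[\log a,\log b]$.

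\textit{Step 2 (reduction for the debiased estimator).} Put $U_n:=(2n-1)T_n-2(n-1)T_{n-1}$ and let $b_n:=\log\mu-\mathbb E[T_n]\ge0$ be the bias. A direct computation gives
\begin{equation}
\mathbb E[U_n]=(2n-1)(\log\mu-b_n)-2(n-1)(\log\mu-b_{n-1})=\log\mu-\big[(2n-1)b_n-2(n-1)b_{n-1}\big],
\end{equation}
so the remaining claims $\mathbb E[U_n]>\log\mu$ (eventually) and $\mathbb E[U_n]\to\log\mu$ both follow once the bracketed quantity is shown to be negative for all large $n$ and to tend to $0$. The key input is the second-order bias expansion $b_n=\frac{c}{n}+\frac{d}{n^2}+O(n^{-3})$ with $c=\frac{\sigma^2}{2\mu^2}>0$. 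Granting this, an elementary calculation gives $(2n-1)b_n-2(n-1)b_{n-1}=-\frac{c}{n}+O(n^{-2})$: the leading terms combine as $(2n-1)\frac cn-2(n-1)\frac c{n-1}=-\frac cn$, while $(2n-1)\frac d{n^2}-2(n-1)\frac d{(n-1)^2}=O(n^{-2})$ and $(2n-1)\,O(n^{-3})=O(n^{-2})$. Hence for $n$ large it is at most $-\frac{c}{2n}<0$ and it tends to $0$; together with Step 1 this is precisely the displayed chain of inequalities.

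\textit{Step 3 (the bias expansion -- the main technical step).} Expand $\log$ about $\mu$ with Lagrange remainder through order $6$: $\log S_n=\log\mu+\sum_{k=1}^{5}\frac{(-1)^{k-1}}{k\mu^k}(S_n-\mu)^k+R_n$ with $|R_n|\le \frac{1}{6a^{6}}(S_n-\mu)^6$ since $\xi_n\ge a$. Taking expectations and using the i.i.d.\ central-moment identities $\mathbb E(S_n-\mu)=0$, $\mathbb E(S_n-\mu)^2=\sigma^2/n$, $\mathbb E(S_n-\mu)^3=\mu_3/n^2$, $\mathbb E(S_n-\mu)^4=3\sigma^4/n^2+O(n^{-3})$, $\mathbb E(S_n-\mu)^5=O(n^{-3})$, together with $\mathbb E|R_n|\le \frac{1}{6a^6}\mathbb E(S_n-\mu)^6=O(n^{-3})$ (all finite because $Y^1$ is bounded), yields
\begin{equation}
\mathbb E[\log S_n]=\log\mu-\frac{\sigma^2}{2\mu^2}\frac1n+\Big(\tfrac{\mu_3}{3\mu^3}-\tfrac{3\sigma^4}{4\mu^4}\Big)\frac1{n^2}+O\!\big(n^{-3}\big),
\end{equation}
which is the expansion used in Step 2 with $c=\frac{\sigma^2}{2\mu^2}>0$. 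This closes the proof; see \citet{nowozin2018debiasing} for the analogous computation and for numerically stable implementations.

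\textit{Main obstacle.} The delicate point is that the naive estimate $b_n=\Theta(1/n)$ is \emph{not} enough: in $\mathbb E[U_n]-\log\mu$ the factor $(2n-1)$ multiplies the $O(n^{-2})$ correction to the bias, producing a term of the same order $1/n$ as the leading $-c/n$, so the sign could be lost. One must control the bias to \emph{second} order with an $O(n^{-3})$ remainder -- exactly what boundedness of $\exp(-E)$ provides (finite higher moments of $S_n$ of the expected order and uniformly bounded Taylor remainders). If only suitable integrability of $\exp(-E)$ is assumed rather than boundedness, the same expansion holds by the classical delta-method / Edgeworth expansion for smooth functions of sample means under the corresponding moment conditions.
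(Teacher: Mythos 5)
Your proof is correct and follows essentially the same route as the paper's: both hinge on the second-order bias expansion $\mathbb{E}[\log S_n] = \log\mu - \tfrac{\sigma^2}{2\mu^2 n} + \tfrac{d}{n^2} + O(n^{-3})$ from \citet{nowozin2018debiasing}, then apply it to $T_n$ and to the Richardson-type combination $(2n-1)T_n - 2(n-1)T_{n-1}$. There are three small ways your write-up improves on the paper's. First, you re-derive the expansion (Taylor about $\mu$ with Lagrange remainder, then i.i.d.\ central-moment bookkeeping) under an explicit boundedness assumption on $\exp(-E)$, where the paper just cites Nowozin's Eq.~35 without stating moment hypotheses. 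Second, you obtain $\mathbb{E}[T_n] < \log\mu$ for \emph{all} $n$ by strict Jensen, whereas the paper only deduces it for large $n$ from the sign of the leading bias term; your version is both tighter and simpler for that inner inequality. Third, you explicitly flag the subtlety that the $(2n-1)$ prefactor promotes the $O(n^{-2})$ bias remainder to $O(n^{-1})$, which is exactly why a first-order bias estimate would not pin down the sign of $\mathbb{E}[U_n] - \log\mu$; the paper uses the second-order form but does not comment on why it is needed. One notational point in your favor: the paper's statement of the expansion mislabels $\mu$ and $\mu_k$ as $\mathbb{E}[T_n]$ and central moments of $T_n$, when they must be the mean and central moments of $Y = \exp(-E(x))$ (equivalently, the $\mu_k/n^{k-1}$-type scalings only make sense for moments of the summands, not of $T_n$ itself); your definitions are the correct ones.
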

\begin{proof}
From \citet{nowozin2018debiasing} Eq. 35, we can write $\mathbb{E}[T_n]$ as
\begin{multline}
    \mathbb{E}[T_n] = Z_\theta - \frac{\mu_2}{2 \mu^2}\frac{1}{n} + \frac{1}{3\mu^3}\frac{\mu_3}{n^2} - \frac{1}{4\mu^4}(\frac{3}{n^2}\mu_2^2 + \frac{1}{n^3}(\mu_4 - 3\mu_2^2)) \\+ \frac{1}{5\mu^5} (\frac{10}{n^3}\mu_3\mu_2 +\frac{1}{n^4}(\mu_5 - 10 \mu_3\mu_2)) + o(n^{-3})
    \label{eq:taylor}
\end{multline}
Where $\mu = \mathbb{E}[T_n]$, $\mu_k = \mathbb{E}[(T_n-\mu)^k]$. Equivalently,
\begin{equation}
    \mathbb{E}[T_n] = Z_\theta - \frac{\mu_2}{2 \mu^2}\frac{1}{n} + o(n^{-1})
\end{equation}
Therefore, $\lim_{n\to\infty} \mathbb{E}[T_n] = Z_\theta$. So $\forall \epsilon > 0$, $\exists N_1>0$ such that when $n> N_1$, $\mathbb{E}[T_n] > Z_\theta - \epsilon$. On the other hand, $\lim_{n\to \infty}n(Z_\theta - \mathbb{E}[T_n]) = \lim_{n\to \infty}\frac{\mu_2}{2 \mu^2} + o(1) = \frac{\mu_2}{2 \mu^2} > 0$ (because $\lim_{n\to\infty} o(1) = 0$), so $\exists N_2>0$ such that when $n>N_2$ we have $Z_\theta > \mathbb{E}[T_n]$. Up to this point, we have proved that $Z_\theta -\epsilon < \mathbb{E}[T_n] < Z_\theta$.

For the other half part of the proof, using Eq.~\ref{eq:taylor} we have
\begin{equation}\mathbb{E}[T_n] = Z_\theta - \frac{\mu_2}{2\mu^2}\frac{1}{n} + \frac{c}{n^2} + o(n^{-2})
\end{equation}
where $c$ is a constant. Therefore, $\mathbb{E}[(2n-1) T_n - 2(n-1) T_{n-1}] = (2n-1) \mathbb{E}[T_n] - 2(n-1)\mathbb{E}[T_{n-1}] = Z_\theta + \frac{\mu_2}{2\mu^2}\frac{1}{n} + o(n^{-1})$. Therefore $\lim_{n\to\infty} \mathbb{E}[(2n-1) T_n - 2(n-1) T_{n-1}] = Z_\theta$, hence $\forall \epsilon>0$, $\exists N_3>0$ such that $\forall n>N_3$ $\mathbb{E}[(2n-1) T_n - 2(n-1) T_{n-1}] < Z_\theta + \epsilon$. Furthermore, $\lim_{n\to\infty}n(\mathbb{E}[(2n-1) T_n - 2(n-1) T_{n-1}] - Z_\theta) = \lim_{n\to\infty} \frac{\mu_2}{2\mu^2} + o(1) > 0$, so $\exists N_4>0$ such that when $n>N_4$ we have $\mathbb{E}[(2n-1) T_n - 2(n-1) T_{n-1}>Z_\theta$.

Putting the above together, $\forall \epsilon>0$, let $N=\max\{N_1,N_2,N_3,N_4\}$, then $\forall n>N$,
\begin{equation}
      Z_\theta -\epsilon < \mathbb{E}[T_n] < Z_\theta  < \mathbb{E}[(2n-1) T_n - 2(n-1) T_{n-1}] < Z_\theta +\epsilon
\end{equation}
\end{proof}

\section{Derivation of Eq.~\ref{eq:stepppl}} \label{app:proof6}
Without loss of generality, we ignore the prefix $x_1, \cdots, x_p$.
\begin{align*}
   & P(x_1, \cdots, x_{t}) = \sum_{x_{t+1}, \cdots, x_T} P(x_1, \cdots, x_T)\\
   & = \sum_{x_{t+1}, \cdots, x_T} P_{\phi}(x_1, \cdots, x_T)\exp (-E_\theta(x_{1},\cdots, x_T)) / Z_{\theta}\\
   & = \sum_{x_{t+1}, \cdots, x_T} P_{\phi}(x_1, \cdots, x_t) P_{\phi}(x_{t+1}, \cdots, x_T|x_1, \cdots, x_t)\exp (-E_\theta(x_{1}, \cdots, x_T)) / Z_{\theta}\\
   & = \mathbb{E}_{x_{t+1}, \cdots, x_T \sim P_{\phi}(x_{t+1}, \cdots, x_T|x_1, \cdots, x_t)} P_{\phi}(x_1, \cdots, x_t) \exp (-E_\theta(x_{1}, \cdots, x_T)) / Z_{\theta}\\
   & = P_{\phi}(x_1, \cdots, x_t) \mathbb{E}_{x_{t+1}, \cdots, x_T \sim P_{\phi}(x_{t+1}, \cdots, x_T|x_1, \cdots, x_t)}  \exp (-E_\theta(x_{1}, \cdots, x_T)) / Z_{\theta}\\
   & = P_{\phi}(x_1, \cdots, x_t) \mathbb{E}_{x_{t+1}', \cdots, x_T' \sim P_{\phi}(\cdot|x_1, \cdots, x_t)}  \exp (-E_\theta(x_{1}, \cdots,x_t, x_{t+1}',\cdots, x_T')) / Z_{\theta}\\
\end{align*}
Therefore,
\begin{align*}
    &P(x_t | x_{<t}) = \frac{P(x_{1}, \cdots, x_t)}{P(x_1, \cdots, x_{t-1})}\\
    & = \frac{P_{\phi}(x_1, \cdots, x_t) \mathbb{E}_{x_{t+1}', \cdots, x_T' \sim P_{\phi}(\cdot|x_1, \cdots, x_t)}  \exp (-E_\theta(x_{1}, \cdots,x_t, x_{t+1}',\cdots, x_T')) / Z_{\theta}}{P_{\phi}(x_1, \cdots, x_{t-1}) \mathbb{E}_{x_{t}', \cdots, x_T' \sim P_{\phi}(\cdot|x_1, \cdots, x_{t-1})}  \exp (-E_\theta(x_{1}, \cdots,x_{t-1}, x_t',\cdots, x_T')) / Z_{\theta}}\\
    & = \frac{P_{\phi}(x_1, \cdots, x_t) \mathbb{E}_{x_{t+1}', \cdots, x_T' \sim P_{\phi}(\cdot|x_{\le t})}  \exp (-E_\theta(x_{\le t}, x_{t+1}',\cdots, x_T')) }{P_{\phi}(x_1, \cdots, x_{t-1}) \mathbb{E}_{x_{t}', \cdots, x_T' \sim P_{\phi}(\cdot|x_{\le t-1})}  \exp (-E_\theta(x_{\le t-1}, x_t',\cdots, x_T')) }\\
    &=P_{\phi}(x_t|x_{<t}) \frac{\mathbb{E}_{x_{t+1}',\cdots, x_{T}' \sim P_{\phi}(\cdot | x_{\le t})}[\exp (-E_\theta(x_{\le t}, x_{t+1}',\cdots, x_T'))]}{\mathbb{E}_{x_{t}',\cdots, x_{T}' \sim P_{\phi}(\cdot | x_{\le t-1})}[\exp (-E_\theta(x_{\le t-1}, x_{t}',\cdots, x_T'))]}.
\end{align*}

\newpage
\begin{figure}[t]
    \centering
    \includegraphics[width=0.99\linewidth]{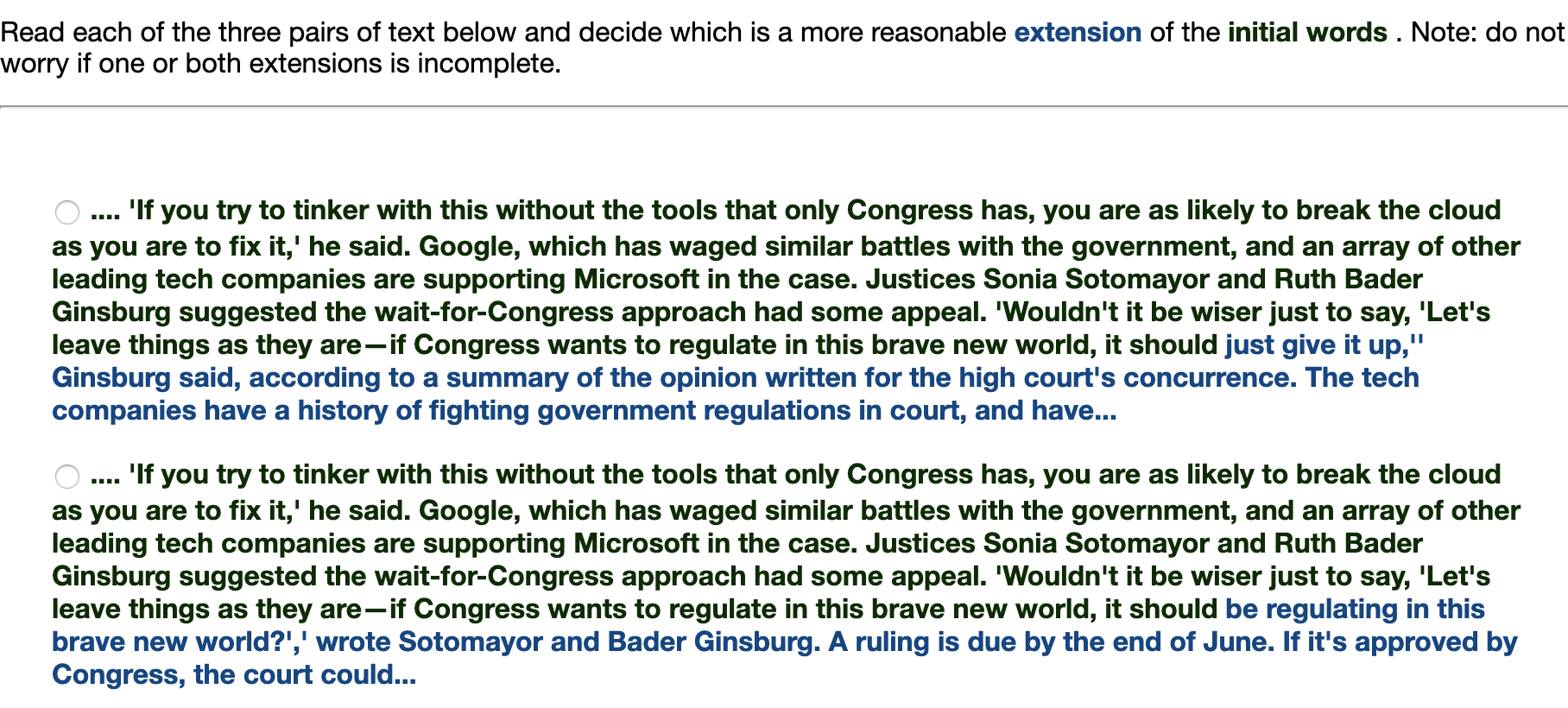}
    \caption{Screenshot of the human evaluation.}
    \label{fig:human}
\end{figure}

\section{Human Evaluation} \label{app:human}
A screenshot of the human evaluation experiments can be found in Fig~\ref{fig:human}.
Every page asks for 4 comparisons, one of which we know what the ground truth answer is.
We subsampled 333 sentences from the test set of CC-News, and asked 3 Amazon Mechanical turkers to vote.
We consider one continuation better if it gets more votes. To check the quality of the received ratings,
we performed a qualification task beforehand, where one of the continuations is real text,
and we kept the top half performing turkers for further evaluation (corresponding to higher than 66.67\% accuracy
for discriminating real from LM samples -- for a total of 26 qualified turkers).
Then in the actual experiment, we use one out of every four comparisons as an attention check and drop
responses if the turker did not pass the check.

\newpage
\section{\label{appendix:examples}Qualitative analysis}
We present generation examples when our approach \textsc{Base LM} outperforms baseline \textsc{BALM} in Table~\ref{tab:examples1}, and when our approach underperforms in Table~\ref{tab:examples2}. Here the judgment is based on human evaluation when all three turkers unanimously voted in favor of one model over the other.

\begin{table}[!htp]
  \tiny
    \centering
    \begin{tabular}{m{5.5cm}m{1.3cm}m{5.9cm}}
    \toprule
    Prefix & Model &  Generation   \\
    \midrule
    \multirow{5}{5.5cm}{
    \footnote{Excerpt from \url{https://chicago.cbslocal.com/2017/11/20/chance-the-rapper-jewel-osco/}}the timing of the donation couldn't be better, and even went on to say that no one is more relevant than Chance right now. Thank you to Doug and @jewelosco for donating \$1,000,000 to The New Chance Arts \& Literature Fund, which will ensure more students have access to arts enrichment education! \#supportcps pic.twitter.com/MXZtpscU5b — SocialWorks (@SocialWorks\_Chi) November 20, 2017 ``In the past, CPS hasn't really gotten a lot of money from non-profits} &\textsc{Base LM} &. And the fact that the money is coming from the government, it makes a big difference,'' he said. ``We're not the biggest donor of arts education, so to\\
     \cline{2-3}
     &\textsc{BALM} & , and it doesn't even have the kind of resources or funding for the arts or literary programs that we need to do the things that people want to support,'' said Chance.\\
     \cline{2-3}
     &\textsc{BiT-Base-Worst} &  ,” said Cpl. Chris Bissonnette of the Stony Brook Police Department. “So the reason is, as I think about the impact of these grants and these grants to schools \\
     \cline{2-3}
     &\textsc{BiT-Base} &  to fund programming. So, I think that it's a great time for us to do something to support that,'' said Chance. ``We want to make sure that our programs \\
     \cline{2-3}
     &\textsc{Gold} &  and charitable arms and organizations because there's been a past of corrupt activity in the city of Chicago,'' Chance said. ``And so, when it comes down to it, it \\
     \midrule
     \multirow{5}{5.5cm}{\footnote{Excerpt from  \url{https://chiefswire.usatoday.com/2017/12/10/halftime-analysis-for-chiefs-vs-raiders-week-14/}}quarter. The penalties are still somewhat of a problem but tight ends Travis Kelce and Demetrius Harris made some impressive grown-man football plays. -It was nice to see running back Kareem Hunt get in the end zone for the first time since Week 3. He must feel good to end the drought. -Kelce was visibly frustrated on the sidelines and rightfully so. The officials seem to be leaning toward Oakland with calls today. Still, Kelce should've caught that easy pass that he dropped. -Quarterback Alex Smith has delivered a couple of nice deep balls to } &\textsc{Base LM} & get his hands on the ball this week. He threw two touchdown passes on Thursday. He should get another touchdown as the season goes on. He's got a good chance to be one of \\
     \cline{2-3}
     &\textsc{BALM} &  the Chiefs and the Chiefs defense has made them pay for it. The offense has done a nice job with the ball, and they's been a factor. It's nice to have\\
     \cline{2-3}
     &\textsc{BiT-Base-Worst} &  tight end Martavis Bryant to get the ball back, but the throw from Smith to Davis Webb is incomplete, which could lead to an incompletion. -The Chiefs are now 5-2 after winning \\
     \cline{2-3}
     &\textsc{BiT-Base} &  help the Chiefs win this game. His completion of a deep ball to wide receiver Sammy Watkins is a nice sign. -Jalen Ramsey should be back in the lineup today. The Jaguars are going \\
     \cline{2-3}
     &\textsc{Gold} &  receivers Albert Wilson and Tyreek Hill. The footwork by Wilson on the deep ball was amazing and Hill's ball-tracking skills were on point as usual. Good things happen when Smith trusts \\
     \midrule
    \multirow{5}{5.5cm}{\footnote{Excerpt from \url{https://www.glamour.com/story/kourtney-kardashian-intense-detox}}has an amazing ability to detoxify itself namely by the liver, kidneys, and skin,'' she says. Detox programs and diets do sometimes make people feel better, but that happens more often if you're eating and/or drinking things that make feel like crap (something it doesn't sound like Kardashian was doing in the first place). And if, somehow, you discover that you have elevated levels of mercury or lead in your body like Kardashian said she had, it's important to figure out why, Dr. Wider says. ``Finding the source is extremely important for any heavy metal, so you} &\textsc{Base LM} & can't say 'no or I've never seen something of that value, so I'll try and find it again and see what happens.'' So don't be afraid to find a way to detox or \\
     \cline{2-3}
     &\textsc{BALM} &  want to get the full picture and not just have to look at the whole picture,'' she says. ``It can be a great idea to talk to your doctor to determine what's causing your symptoms."\\
     \cline{2-3}
     &\textsc{BiT-Base-Worst} &  can get rid of toxins that are found in foods like whole wheat bread, nuts like walnuts, walnuts and walnuts," she says. "If a source of the metals has elevated levels of \\
     \cline{2-3}
     &\textsc{BiT-Base} &  can't say, 'Oh my God, it's a lead, but I'm fine.' But it's definitely a problem in our bodies,'' she says. ``And if it's a problem in \\
     \cline{2-3}
     &\textsc{Gold} &  can lower your exposure over time,'' she says. Meaning, a detox diet probably isn't going to help you as much as, say, reducing your intake of foods that are high in mercury (like\\
         \bottomrule\\
    \end{tabular}
    \caption{Example generations when \textsc{BIT-BASE} outperforms \textsc{BALM} according to human evaluation. \textsc{BiT-Base-Worst} shows the LM sample with the highest energy score.}
    \label{tab:examples1}
\end{table}

\begin{table}[!htp]
  \tiny
    \centering
    \begin{tabular}{m{5.5cm}m{1.3cm}m{6.0cm}}
    \toprule
    Prefix & Model &  Generation   \\
    \midrule
    \multirow{5}{5.5cm}{\footnote{Excerpt from \url{https://www.libraryjournal.com/?detailStory=lakewood-oh-mom-sues-library-over-teens-rough-treatment}}but as the audio only came on halfway through the incident, it did not record his initial words to the girl or her response. It was also dropped in the hallway during their interaction, so it did not record any of what happened once the two left the library. Jones brought her outside to the library's front steps, where he waited for responding officers to arrive. They did not call for an ambulance, according to the lawsuit. Rather, when the officers arrived, they told her to spit blood on the grass instead of the concrete, and to move from the steps to a bench. Some 20 minutes} &\textsc{Base LM} &later, she was dead, according to the lawsuit. ``It's not uncommon for an officer to spit blood on an unarmed, vulnerable adult. The use of spit is illegal in a public place,''\\
     \cline{2-3}
     &\textsc{BALM} &later, Jones was arrested for disorderly conduct and released after posting bond, the suit said. She was arrested again on May 2. In the suit, Jones is seeking more than \$1 million. A\\
     \cline{2-3}
     &\textsc{BiT-Base-Worst} &  later, he walked up on the ground, where they began to yell and yell again. When Jones refused, the officers ordered him not to leave or leave any more. Jones then pulled his gun on\\
     \cline{2-3}
     &\textsc{BiT-Base} &  after the officers' arrival, Jones and the girl left the library and headed to the parking lot of a nearby apartment complex. ``This is the type of behavior that is most likely to cause harm to\\
     \cline{2-3}
     &\textsc{Gold} &  later, J.G's mother arrived and took her daughter to Cleveland Clinic in Lakewood, where she was treated for a dislocated jaw, a head injury, and an injury to her knee. \\
     \midrule
     \multirow{5}{5.5cm}{\footnote{Excerpt from \url{https://www.sun-sentinel.com/community/delray-sun/fl-drf-village-academy-steam-0418-20180410-story.html}}, Bronson said. ``The initiative provides a variety of supports to early childhood programs' children, families and staff. The resources provided through this partnership increase the quality of the participating programs, which benefits the community and impacts our future in such a positive way,'' Scott said. Visit PNCGrowUpGreat.com/donorschoose. $\backslash$nHere are Varsity sports headlines for April 13, 2018. Refresh to get the latest as we add news throughout the night as we collect scores: Best of the best in track and field Our Sentinel coverage area top performers lists for girls track and field  } &\textsc{Base LM} &  at the Varsity Track \& Field Invite.$\backslash$nThe U.S. Army Corps of Engineers has approved \$2 billion in construction work along the U.S.-Mexico boundary as a way to \\
     \cline{2-3}
     &\textsc{BALM} &  . Check back frequently for updates. The Sentinel also has a feature on the boys basketball and wrestling teams. Boys golf The Sentinel boys golf and wrestling teams have been one of those teams who has had some\\
     \cline{2-3}
     &\textsc{BiT-Base-Worst} &  .$\backslash$nLONDON, April 13 (IFR) - The following are some of the main factors expected to affect Spanish corporate bond prices on Wednesday. BAML: Spanish sovereign wealth fund PPV \\
     \cline{2-3}
     &\textsc{BiT-Base} &  .$\backslash$nA few weeks back, it's been rumored that the HTC Desire was going to be the company’s last flagship phone, and now, a new leak has confirmed that it \\
     \cline{2-3}
     &\textsc{Gold} & and boys track and field are updated going into the Saturday district meets. The season is heating up with more district and region races coming up next week. Click these links for girls top performers and boys top\\
     \midrule
    \vspace{-12pt}\multirow{5}{5.5cm}{
    \footnote{Excerpt from \url{https://seekingalpha.com/article/4215142-apple-looks-to-recharge-tech-sector}}leaders meeting November 17-18 in Papua New Guinea as potential Xi-Trump meet dates. If all else fails, Trump and Xi are also expected to meet for a bit at the G20 meeting at the end of November. On the economic calendar next week, the update on jobs and the U.S. trade deficit are the headliners on November 2. Notable earnings reports: Akamai Technologies (NASDAQ:AKAM), Mondelez International (NASDAQ:MDLZ) and Olin Corp. (NYSE:OLN) on October 29; Under Armour (NYSE:} &\textsc{Base LM} & UAA), General Motors (NYSE:GM) on November 4; and Procter \& Gamble (NYSE:PG) for October. On the retail front, Lowe's Companies (NYSE:L \\
     \cline{2-3}
     &\textsc{BALM} &  UA) on October 30; CVS Health (NASDAQ:CVS) on November 27; Intel Corporation (NASDAQ:INTC) on October 28; and Verizon Communications (NYSE:V\\
     \cline{2-3}
     &\textsc{BiT-Base-Worst} &  UAA) and Adidas (OTCPK:ADDYYF; OTCQX:ADDYYFGF; OLYMP), on November 30; and Qualcomm Incorporated (NASDAQ: \\
     \cline{2-3}
     &\textsc{BiT-Base} &   UAA), Johnson Controls (NYSE:JCI) and Cisco Systems (NASDAQ:CSCO) on November 6.$\backslash$nA woman who had to have her nose and mouth taped as punishment \\
     \cline{2-3}
     &\textsc{Gold} &  UAA), eBay (NASDAQ:EBAY), General Electric (NYSE:GE), Coca-Cola (NYSE:KO), Pfizer (NYSE:PFE) and Electronic Arts (NAS\\
         \bottomrule\\
    \end{tabular}
    \caption{Example generations when \textsc{BIT-BASE} underperforms \textsc{BALM} according to human evaluation. \textsc{BiT-Base-Worst} shows the LM sample with the highest energy score.}
    \label{tab:examples2}
\end{table}

\newpage
\bibliography{refs}

\end{document}